\DeclareMathOperator*{\argmax}{arg\,max}
\newcommand{\ubar}[1]{\underaccent{\bar}{#1}}
\newcommand{\utilde}[1]{\underaccent{\tilde}{#1}}
\newtheorem{theorem}{Theorem}[section]
\newtheorem{lemma}{Lemma}[section]
\newtheorem{definition}{Definition}[section]
\newtheorem{remark}{Remark}[section]
\title{Strategically Efficient Exploration in Competitive Multi-agent \\ Reinforcement Learning}
\author[1]{\href{mailto:R.T.Loftin@tudelft.nl}{Robert~Loftin\thanks{This work done while at Microsoft Research Cambridge}}{}} 
\author[3]{\href{mailto:Aadirupa.Saha@microsoft.com}{Aadirupa~Saha}{}}
\author[2]{\href{mailto:Sam.Devlin@microsoft.com}{Sam~Devlin}{}}
\author[2]{\href{mailto:Katja.Hofmann@microsoft.com}{Katja~Hofmann}{}}
\affil[1]{
    TU Delft \\
    Delft, Netherlands
}
\affil[2]{
    Microsoft Research Cambridge \\
    Cambridge, UK
}
\affil[3]{
    Microsoft Research NYC \\
    New York, USA
}
\begin{document}
\maketitle

\begin{abstract}

    High sample complexity remains a barrier to the application of reinforcement learning (RL), particularly in multi-agent systems.  A large body of work has demonstrated that exploration mechanisms based on the principle of \emph{optimism under uncertainty} can significantly improve the sample efficiency of RL in single agent tasks.  This work seeks to understand the role of optimistic exploration in non-cooperative multi-agent settings.  We will show that, in zero-sum games, optimistic exploration can cause the learner to waste time sampling parts of the state space that are irrelevant to strategic play, as they can only be reached through cooperation between both players.  To address this issue, we introduce a formal notion of \emph{strategically efficient} exploration in Markov games, and use this to develop two strategically efficient learning algorithms for finite Markov games. We demonstrate that these methods can be significantly more sample efficient than their optimistic counterparts.

\end{abstract}

\section{Introduction}
    
    Despite its success in recent years, the applicability of reinforcement learning is still limited by the enormous amounts of training data required to solve complex tasks, particularly when those tasks involve multiple agents \citep{vinyals2019alphastar,berner2019dota}.  For single-agent problems it has been shown that sample efficiency can be significantly improved with the use of more sophisticated exploration mechanisms that take into account the learner's own uncertainty about the learning task~\citep{pathak2017icm,burda2018rnd}. Extending these approaches to multi-agent settings, however, remains an open challenge. 
    
    In this work, we focus on efficient exploration for reinforcement learning in competitive multi-agent settings.  In recent related work, \cite{bai2020ulcb} have presented algorithms for self-play in finite Markov games with sample complexity bounds that are polynomial in the size of the state and action spaces. These methods are based on the principle of \emph{optimism under uncertainty}, in which each agent acts greedily w.r.t. a statistically plausible model of the learning task that maximizes the agent's expected return.  In two-player games, this optimism encourages the players to cooperate to reach states that have not previously been observed (driven by the assumption that both players can receive large positive returns from such unknown states).  In zero-sum games, however, such cooperative behavior would never be observed between rational opponents.
    
    In this paper, we show that such cooperative exploration is \emph{strategically inefficient}, and may cause the learner to waste time exploring parts of the state space that provide no additional information about the Nash equilibria of the game.  The key question for this work is how a reinforcement learning algorithm can recognize and avoid such strategically irrelevant parts of the state space, while still ensuring that an approximate solution to the game will be found.  To address this question, we propose two reinforcement learning algorithms, 
    \emph{Strategic ULCB} and \emph{Strategic Nash-Q}, which are strategically efficient in a suitably well-defined sense.  As with the optimistic algorithms of \cite{bai2020ulcb}, these algorithms select exploration policies optimistically w.r.t. a set of statistically plausible games.  However, unlike prior work, in our approach each player chooses an optimistic best-response against the strongest known adversary strategy (rather than its opponents' exploration strategy).  
    
    In Section~\ref{def:strategic_efficiency} we will prove that that Strategic ULCB is both strategically efficient and sample efficient in the traditional sense, while in Section~\ref{experiments} we will show that Strategic ULCB and Nash-Q significantly outperform their existing, optimistic counterparts.  Our key conclusion is that the direct extension of optimistic exploration to multi-agent RL in competitive settings can be highly inefficient, and that by leveraging the adversarial nature of zero-sum games, it is possible to dramatically improve sample efficiency through the use of strategically efficient exploration mechanisms.

\section{Preliminaries}
    \label{preliminaries}

    This work focuses on the role of exploration in finite, two-player zero-sum Markov games~\citep{littman1994markov}.  We define such a Markov games as a tuple $G = \{S, A, B, P, R, H \}$.  Here $S$ is a finite state space, and we let $h \in [1,H]$ be the steps since the start of the current episode.  $A_{h,s}$ and $B_{h,s}$ are state and step-dependent action spaces $A_{h,s}$ for the min and max-players respectively, $P_h: S \times A \times B \mapsto \mathbb{P}(S)$ is the step-dependent transition distribution, $R_h : S \times A \times B \mapsto [0, 1]$ is the step-dependent reward function for the max player, and $H$ is the fixed episode length.  Let $\vert S \vert = \max_{h}\vert S_h \vert$, $\vert A \vert = \max_{s,h}\vert A_{h,s} \vert$ and $\vert B \vert = \max_{s,h}\vert B_{h,s} \vert$. We assume that rewards are deterministic.  For zero-sum games, we need only specify the reward function for the max-player, with the reward for the min-player defined as $-R_h(s, a, b)$.  The restriction to Markov games implies that the state is fully observable to both agents at all times.  We also assume that there is a unique initial state $s_1$. 
    
    Training proceeds episodically for $K$ episodes of length $H$. For the state $s^{k}_h$ encountered at step $h$ of episode $k$, the learner samples actions $a^{k}_h \in A_{h, s^{k}_h}$ and $b^{k}_h \in B_{h, s^{k}_h}$ from the joint exploration policy $\pi^{k}_h(s, a, b)$.  After taking joint action $(a^{k}_h, b^{k}_h)$, the learner observes reward $r^{k}_h = R_h(s^{k}_h, a^{k}_h, b^{k}_h)$, and state $s^{k}_{h+1} \sim P_h(s^{k}_h, a^{k}_h, b^{k}_h)$ if $h < H$.  We assume here that the exploration policy $\pi^{k}_h : S \mapsto \mathcal{P}(A_{s^{k}_h} \times B_{s^{k}_h})$ is computed in advance for all $s$ and $h$, and fixed throughout episode $k$. When the exploration policy can be factored into separate policies for the max and min-players, we denote these as $\mu^k$ and $\nu^k$ respectively, with $\pi^{k}_h(s,a,b) = \mu^{k}_h(s,a)\nu^{k}_{h}(s,b)$.
    
    For any pair of policies $\mu$,$\nu$, we define $V^{\mu,\nu}_h(s)$ to be the expected return of the max player from state $s$ at step $h$ as:
    \begin{equation}
        \label{eqn:value}
        V^{\mu,\nu}_h(s) = \text{E}\left[ \sum_{i=h}^{H} r_i(s_i, a_i, b_i) \vert \mu,\nu, s_h = s\right]
    \end{equation}
    When training in self-play, we have no way of knowing what adversary the policies we learn will eventually need to play against.  We therefore evaluate our learned policies $\mu$ and $\nu$ in terms of their worst-case return against any adversary policy, which we define as their \emph{exploitability}
    \begin{equation}
        \text{expl}(\mu) =  - \inf_{\nu'} V^{\mu, \nu'}_1(s_1), \hspace{.3cm} \text{expl}(\nu) = \sup_{\mu'}V^{\mu', \nu}_1(s_1)
    \end{equation}
    For a pair of policies $\mu$ and $\nu$, the total exploitability is equal to the $\text{NashConv}$ loss~\citep{johanson2011nashconv,lanctot2017psro}, defined as
    \begin{equation}
        \label{eqn:nash_conv}
        \text{NashConv}(\mu, \nu) = \sup_{\mu'}V^{\mu', \nu}_1(s_1) - \inf_{\nu'} V^{\mu, \nu'}_1(s_1),
    \end{equation}
    If $\mu,\nu$ constitute a Nash equilibrium of the game, then $\text{NashConv}(\mu,\nu) = 0$, and if $\text{NashConv}(\mu,\nu) \leq \epsilon$, then $\mu$ and $\nu$ will constitute an $\epsilon$-Nash equilibrium of the game.

\section{Related Work}
    \label{related}
    
    While we focus on exploration in finite Markov games, this work is motivated by the goal of extending exploration approaches that have proven effective in single-agent deep reinforcement learning to the competitive multi-agent setting.  Many successful approaches guide exploration by providing an additional \emph{intrinsic} reward signal that is larger for states and actions for which the learner is less certain, often referred to as \emph{curiosity}~\citep{burda2018curiosity}.  These include the Intrinsic Curiosity Module~\citep{pathak2017icm}, which uses the error of a supervised transition model to estimate uncertainty, and Random Network Distillation~\citep{burda2018rnd}, which uses the error between a fixed, randomly initialized network and a prediction network trained on the states the learner has observed so far. Such intrinsic rewards have also proven effective in cooperative multi-agent RL, where all agents aim to maximize a common reward function~\citep{iqbal2019coordinated,bohmer2019intrinsic}.
    
    While not subject to the same theoretical guarantees, the use of uncertainty-based intrinsic rewards in deep RL can be motivated by work on finite MDPs, where a number of algorithms based on the principle of optimism under uncertainty have been shown to have good worst-case sample complexity~\citep{jaksch2010ucrl, strehl2008mbie, jin2018qlearning}.  In particular, \cite{strehl2008mbie} and \cite{jin2018qlearning} describe algorithms which incorporate optimism through a count-based exploration bonus of the form $\beta /\sqrt{N(s, a)}$, where $N(s,a)$ is the number of times the state $s$ and action $a$ have been observed previously.  We refer to these algorithms as ``optimistic'' because they select the action that maximizes an upper confidence bound on the expected in the current state, where the upper bound is taken over some set of statistically plausible MDPs.

    Recent results have shown that the use of upper confidence bounds can be extended to self-play in two-player zero-sum Markov games.  \cite{bai2020ulcb} present a model-based self-play algorithm, VI-ULCB (which we will refer to as \emph{Optimistic} ULCB in later sections) that finds an $\epsilon$-equilibrium with at most $O(H^4 \vert S \vert^2 \vert A\vert \vert B\vert /\epsilon^2)$ samples.  VI-ULCB drives exploration by solving for the Nash equilibrium of an optimistic, \emph{general-sum} corresponding to upper and lower confidence bounds on the max-player returns.  \cite{bai2020nash_q} build on this work, presenting a model-free self-play algorithms, Optimistic Nash-Q which finds an $\epsilon$-equilibria in at most $O(H^5 \vert S\vert \vert A\vert \vert B\vert /\epsilon^2)$ samples.
    
    While these bounds are near-optimal in the worst case, they say little about the practical efficiency of these algorithms, or the approaches to exploration that they embody.  These results do not rule out the possibility that the learner will need to explore the entire state-action space, even when this is unnecessary for the identification of an $\epsilon$-equilibrium of the game. More specifically, in Section~\ref{strategic_exploration} we will show that VI-ULCB can select pairs of output policies such that neither policy can plausibly be an equilibrium of the game.  In Section~\ref{experiments}, we will empirically compare Optimistic Nash-Q and VI-ULCB, against two novel algorithms that avoid selecting such implausible policies.  Through these comparisons we will demonstrate that Optimistic Nash-Q and VI-ULCB can suffer from unnecessarily high sample complexity in environments where large parts of the state space are irrelevant the equilibrium solution.

    Finally, we note a connection between the concept of strategically efficient exploration and the Alpha-Beta pruning algorithm from game-tree search~\citep{pearl1980alpha_beta}.  While Alpha-Beta pruning is limited to deterministic, turn-based games with known transition dynamics (and so is not applicable in most RL settings), it nonetheless exploits the adversarial nature of zero-sum games in much the same way that the algorithms developed in this work will.  Like Strategic ULCB and Strategic Nash-Q, Alpha-Beta pruning bounds the value of a state in terms of the strongest adversary strategy it has identified so far, and will not explore states that it knows cannot occur under a minimax optimal strategy for the root player.  Unlike Strategic ULCB and Nash-Q however, Alpha-Beta pruning is not optimistic, and will continue evaluating a set of strategies (corresponding to the current sub-game) even when there exist potentially superior alternatives.

\section{Strategic Exploration}
    \label{strategic_exploration}
    
    \begin{figure}[t]
      \centering
      \includegraphics[width=1.0\columnwidth]{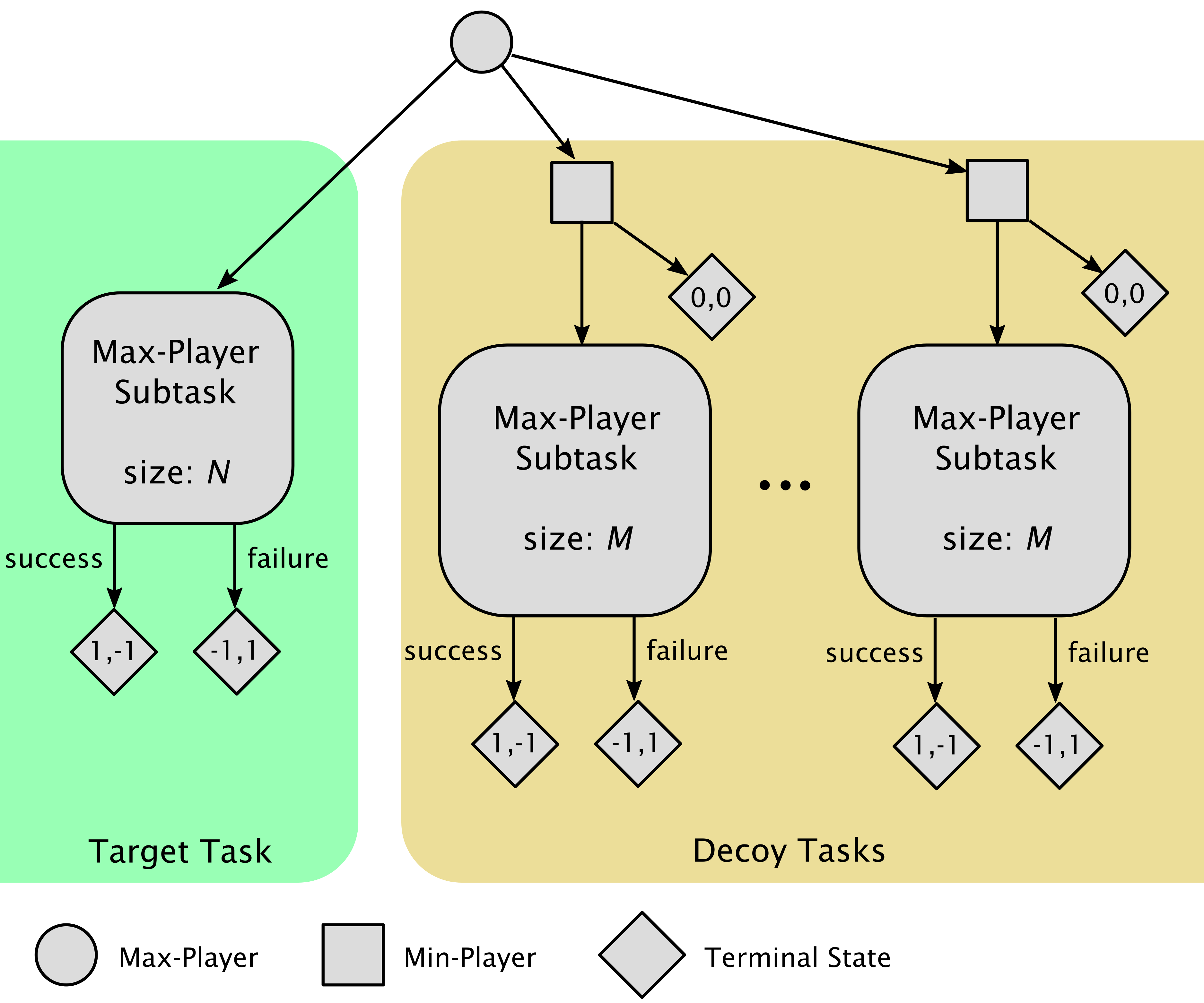}
      \caption{The generic decoy task game.  We can choose any single-player game to define the target task and decoy tasks, as long as the max-player can always succeed in these sub-games with the right policy.  The $(-1,1)$ payoffs correspond to a max-player loss, while the $(0,0)$ is a tie.}
      \label{fig:decoy_game}
    \end{figure}
    
    The worst case sample complexity of optimistic algorithms, such as Optimistic ULCB (Algorithm~\ref{alg:ulcb}) and Optimistic Nash-Q (Algorithm A.1), correspond to the complexity of learning a \emph{complete} model of the game (even for model free algorithms).  In some cases such complete exploration will be necessary, for example, when the game is effectively a single-agent MDP for the max-player, and every possible outcome must be known to ensure the max-player's policy is optimal.  For truly competitive games, however, learning a complete model will often be unnecessary to find a Nash equilibrium.  When this is the case, an optimistic algorithm may waste time exploring parts of the state space that yield no useful information about the solution to the game.
    
    We can illustrate this issue with the abstract, turn-based game shown in Figure~\ref{fig:decoy_game}.  This game, which we will refer to as the \emph{decoy task game}, is composed of a set of single-player sub-tasks in which only the max-player takes actions.  In Section~\ref{experiments} we will show experimental results in this game for a specific choice of sub-task, but for now it is sufficient to assume that each task is ``complex'' in the sense that a learner will have to attempt the task many times before a successful policy is found.  At each episode, the max player first chooses which sub-task they wish to explore.  What is important here is that, for all but one sub-task (the \emph{target task}), the min player has the option to end the game immediately, resulting in a tie, or allow the max-player to attempt the sub-task.  As a result, the learner gains nothing by solving these alternative, or \emph{decoy tasks}, as with or without a solution the best the learner can hope for is a tie if it chooses one of these sub-tasks during evaluation.
    
    In spite of this, an optimistic algorithm may attempt to solve each of the decoy tasks, because, until a sub-task is solved, it will assume that it is possible for both the min and max-players to simultaneously receive a payoff of 1 when that task is complete.  Under an optimistic exploration rule, the learner assumes that the min-player will allow the max-player to complete each of the decoy tasks, when in the underlying game the only reason this would happen is if the max-player gets a payoff $\leq 0$ for completing the task (otherwise the min-player with terminate the game early).  If there are many decoy tasks, optimistic exploration may be highly inefficient.  In this section, we will describe algorithms which, while still sufficiently optimistic to ensure convergence to a solution, will be robust to the existence of such \emph{strategically irrelevant} sub-tasks.  
    
    \subsection{Strategic Efficiency}
    
    To develop strategically efficient algorithms, we will first need to formalize our intuitive notion of strategic efficiency.  Here we define strategic efficiency in terms of the marginal exploration strategies $\mu^{k}$ and $\nu^{k}$ the learner follows during training.  Loosely speaking, a strategically efficient learning algorithm should not consider strategies that it \emph{believes} do not correspond to equilibria of the game.  
    
    Dependence on the agent's own belief is essential, as any strategy could correspond to an equilibrium if we place no restrictions on the set of possible games.  Therefore, we need a representation of the ``belief state'' of a given learning algorithm.  It will be sufficient to consider a representation that is independent of the learning algorithm itself, that is, one which only depends on the observable interactions between the learner and the environment.  We represent the knowledge state by a sequence of sets $C_{1 \leq k} \subset \mathbb{G}(H, S, A, B)$.  Here, $\mathbb{G}(S, A, B)$ is the set of games on $A$, $B$, and the state space $S \cup \{s^{*}\}$ (where $s^{*}$ is a hypothetical absorbing state) and max-player rewards in $[0,H]$.  The absorbing state and larger reward range will simplify the task of proving that an algorithm is strategically efficient.
    
    Each set is $C_k$ itself a random variable, that is, a function $C_{k}(\mathbb{H}_k)$, where $\mathbb{H}_k$ is the history of states, actions, and rewards up to but not including episode $k$.  For $\sigma \in [0,1]$, we say that $C_{0 \leq k}$ are $\sigma$-confidence sets if, under any learning algorithm run on a game $G$
    \begin{equation}
        \text{Pr}\left\{ \exists k \geq 1: G \notin C_k \right\} \leq \sigma
    \end{equation}
    It is possible to define the confidence sets with respect to some subset of $X \subset \mathbb{G}(S, A, B, H)$, such as the set $\mathbb{D}(S, A, B, H)$ of games with deterministic state transitions, so long as we can be certain \emph{a priori} that $G \in D$.  Our definition of strategic efficiency will be with respect to a given sequence of confidence sets. 
    \begin{definition}[Strategic Efficiency]
        \label{def:strategic_efficiency}
        If $C_{1 \leq k}$ are $\sigma$-confidence sets w.r.t. $X \subseteq \mathbb{G}(S, A, B)$, then an algorithm is strategically efficient w.r.t. $C_{1 \leq k}$ if, for all $k \geq 1$, there exists $\tilde{G} \in C_k$ such that
        \begin{equation}
            \exists \tilde{G} \in C_k, \inf_{\nu}V^{\mu^k, \nu}_{\tilde{G},1} \geq \sup_{\mu}\inf_{\nu}V^{\mu, \nu}_{\tilde{G},1}
        \end{equation}
        and there exists $\utilde{G} \in C_k$ such that
        \begin{equation}
            \exists \utilde{G} \in C_k, \sup_{\nu}V^{\mu, \nu^k}_{\utilde{G},1} \leq \inf_{\nu}\sup_{\mu}V^{\mu, \nu}_{\utilde{G},1}
        \end{equation}
    \end{definition}
    Under this definition, a learning algorithm is strategically efficient if its exploration policies are always a component of a \emph{plausible} Nash equilibrium of the true game $G$.   Note that under the trivial sequence $C_{0 \leq k} = \mathbb{G}(S, A, B)$, any learning algorithm would be efficient.  To address this, we will require that the confidence sets converge when data is generated by the algorithm under consideration, that is,  for any $\epsilon > 0$, $\delta \in (0,1]$, there exist $K, \mu, \nu$ s.t. $\text{NashConv}_G(\mu,\nu) \leq \epsilon$ for all $G \in C_K$ with probability at least $1-\delta$.
    
    \subsection{Non-Strategic Exploration}
    \label{non_strategic}
    
    Before discussing the design of strategically efficient learning algorithms, we first demonstrate how the joint-optimism employed by existing approaches can fail to be strategically efficient.  Specifically, we show that Optimistic ULCB can fail to be strategically efficient w.r.t. its own \emph{implicit} confidence sets.   This is easiest to show this for games with deterministic state transitions (which include matrix games with no transitions).  In such games, Optimistic ULCB can be run with an exploration bonus term of $\beta_t = 0$, with efficient exploration being guaranteed by optimistic initialization.  The models maintained by Optimistic ULCB will be exact for all observed $(h,s,a,b) \in \mathbb{H}_k$, and its natural confidence sets will be the sets $D_k$ of games that are exactly consistent the the rewards and state transitions observed up to episode $k$, that is
    \begin{align}
        \label{eqn:version_space}
        \nonumber D_k & = \{G \in \mathcal{D}(S, A, B, H) \vert P_{h}(s^{k}_h, a^{k}_h, b^{k}_h, s^{k}_{h+1}) = 1 \land\\
        &R_h(s^{k}_h, a^{k}_h, b^{k}_h) = r_h, \forall (s^{k}_h, a^{k}_h, b^{k}_h, r^{k}_h, s^{k}_{h+1}) \in \mathbb{H}_{k}\}.
    \end{align}
    
    \begin{remark}
        \label{rmk:optimistic_ULCB}
        Optimistic ULCB is not guaranteed to be strategically efficient with respect to the confidence sets $D_{1\leq k}$ (Equation~\ref{eqn:version_space}) for deterministic games.
    \end{remark}
    We can demonstrate the strategic inefficiency of Optimistic ULCB using a variation on the classical prisoners dilemma.  Similar to the prisoner's dilemma, each player has the option to either cooperate (c) with the other player, or defect (d).  Unlike the original prisoner's dilemma, however, payoffs in this game are zero-sum
    \begin{center}
        \begin{tabular}[t]{r | c c}
              & c & d \\ \hline
            c & (x,-x) & (-.5,.5)  \\
            d & (.5,-.5) & (0,0)
        \end{tabular}
    \end{center}
    with $x \in [-1,1]$.  Note that regardless of the value of $x$, the only equilibrium for this game is the strategy profile in which both players always defect.  Assume now that we have run Optimistic ULCB for three episodes, selecting joint strategies such that the only unobserved combination remaining is joint cooperation.  To compute the exploration strategy for the next episode, Optimistic ULCB will select a Nash equilibrium of the general-sum game
    \begin{center}
        \begin{tabular}[t]{r | c c}
              & c & d \\ \hline
            c & (1,1) & (-.5,.5)  \\
            d & (.5,-.5) & (0,0)
        \end{tabular}
    \end{center}
    where joint cooperation is assumed to yield a payoff of 1 for both players because the true payoffs have never been observed.  Because we know in advance that the game is zero-sum, however, we know that no matter what the true payoff for joint cooperation is, at least one player will have an incentive to defect.  The row player will defect unless its payoff is greater than or equal to .5, but if this is the case. the column player must have a payoff less that or equal to -.5.  This implies that joint defection is the only plausible equilibrium, and that neither player will cooperate as part of an equilibrium strategy for any plausible game.  Therefore, if Optimistic ULCB chooses joint cooperation as its next strategy, it will fail to satisfy Definition~\ref{def:strategic_efficiency}.
    
    \subsection{Strategic ULCB}
    
    \begin{algorithm}[h]
    \begin{algorithmic}[1]
        \small
        \STATE \textbf{Initialize:} $\forall h \in [H]$, $s \in S_h$, $a \in A_{h,s}$, $b \in B_{h,s}$, $s' \in S_{h+1}$, $N^{1}_h(s,a) \leftarrow 0$, $N^{1}_h(s,a,s') \leftarrow 0$.
        \FOR{episode $k = 1,\ldots,K$}
            \FOR{step $h = H, \ldots, 1$}
                \FOR{$s \in S_h$, $a,b \in A_{h,s} \times B_{h,s}$}
                    \STATE $t \leftarrow N^{k}_h(s,a,b)$
                    \STATE $\bar{Q}^{k}_h(s,a,b) \!\!\leftarrow\!\! \min\{\hat{R}^{k}_h(s,a,b) \!\!+\!\! \hat{P}^{k}_h(s,a,b)^{\top}\bar{V}^{k}_{h+1} \!\!+\!\! \beta_t,H\}$
                    \STATE $\ubar{Q}^{k}_h(s,a, b) \!\!\leftarrow\!\! \max\{\hat{R}^{k}_h(s,a,b) \!\!+\!\! \hat{P}^{k}_h(s,a,b)^{\top}\ubar{V}^{k}_{h+1} \!\!-\!\! \beta_t,0\}$
                \ENDFOR
                \FOR{$s \in S_h$}
                    \IF{Strategic ULCB}
                        \STATE $\mu^{k}_{h}(s),\tilde{\nu}^{k}_{h} \leftarrow \text{Nash}(\bar{Q}^{k}_h(s,\cdot,\cdot),-\bar{Q}^{k}_h(s,\cdot,\cdot))$
                        \STATE $\tilde{\mu}^{k}_{h}(s),\nu^{k}_{h} \leftarrow \text{Nash}(\ubar{Q}^{k}_h(s,\cdot,\cdot),-\ubar{Q}^{k}_h(s,\cdot,\cdot))$
                    \ELSIF{Optimistic ULCB}
                        \STATE $\mu^{k}_{h}(s),\nu^{k}_{h} \leftarrow \text{Nash}(\bar{Q}^{k}_h(s,\cdot,\cdot),-\ubar{Q}^{k}_h(s,\cdot,\cdot))$
                        \STATE $\tilde{\mu}^{k}_{h}(s),\tilde{\nu}^{k}_{h} \leftarrow \mu^{k}_{h}(s),\nu^{k}_{h}$
                    \ENDIF
                    \STATE $\bar{V}^{k}_h(s) \leftarrow \mu^{k}_{h}(s)^{\top}\bar{Q}^{k}_h(s,\cdot,\cdot) \tilde{\nu}^{k}_{h}$
                    \STATE $\ubar{V}^{k}_h(s) \leftarrow \tilde{\mu}^{k}_{h}(s)^{\top}\ubar{Q}^{k}_h(s,\cdot,\cdot) \nu^{k}_{h}$
                \ENDFOR
            \ENDFOR
            \STATE set $s^{k}_1 \leftarrow s_1$
            \FOR{step $h = 1, \ldots ,H$}
                \STATE Take actions $a^{k}_h \sim \mu^{k}_h(s^{k}_h)$ and $b^{k}_h \sim \nu^{k}_h(s^{k}_h)$
                \STATE Observe max-player reward $r^{k}_h$ and next state $s^{k}_{h+1}$
                \STATE $N^{k+1}_h(s^{k}_h, a^{k}_h, b^{k}_h) \leftarrow N^{k}_h(s^{k}_h, a^{k}_h, b^{k}_h) + 1$
                \STATE $N^{k+1}_h(s^{k}_h, a^{k}_h, b^{k}_h, s^{k}_{h+1}) \leftarrow N^{k}_h(s^{k}_h, a^{k}_h, b^{k}_h, s^{k}_{h+1}) + 1$
                \STATE $\hat{P}^{k+1}_h(\cdot \vert s^{k}_h, a^{k}_h, b^{k}_h) \leftarrow \frac{N^{k+1}_h(s_h, a^{k}_h, b^{k}_h, \cdot)}{N^{k+1}_h(s_h, a^{k}_h, b^{k}_h)}$
                \STATE $\hat{R}^{k+1}_h(s_h, a^{k}_h, b^{k}_h) \leftarrow r_h$
            \ENDFOR
        \ENDFOR
    \end{algorithmic}
    \caption{The Strategic (and Optimistic) ULCB algorithms. The function $\text{Nash}(G,G')$ computes a mixed strategy profile  $(\mu,\nu)$ constituting a Nash equilibrium of the two-player game given by the payoff matrices $G$ and $G'$.  Strategic ULCB maintains separate evaluation policies $\tilde{\mu}^{k}$ and $\tilde{\nu}^{k}$, while Optimistic ULCB~\citep{bai2020ulcb} uses the same policies for exploration and evaluation.}
    \label{alg:ulcb}
    \end{algorithm}
    
    We now present a model-based learning algorithm that will be provably strategically efficient in some settings.  As this new algorithm is similar in structure to Optimistic ULCB, we refer to it as Strategic ULCB (Algorithm~\ref{alg:ulcb}).  Strategic ULCB differs from Optimistic ULCB in three key ways.  First, it maintains separate policies $\tilde{\mu}^{k}$ and $\tilde{\nu}^{k}$ for evaluation.  This is necessary because strategically efficient exploration may converge to a solution before the game has been fully explored, such that the optimistic exploration policies may remain exploitable indefinitely.  Second, the max-player exploration policy for each state $s$ is defined as a minimax optimal strategy of the matrix game defined by $\bar{Q}^{k}_h(s,\cdot,\cdot)$ (the min-player exploration policy is computed w.r.t. $\ubar{Q}^{k}_h(s,\cdot,\cdot)$).  This focuses exploration on actions that maximize the return a player can \emph{optimistically guarantee} against an adversary.  Finally, the value function updates are
    \begin{align}
        \bar{V}^{k}_h(s) &= \mu^{k}_{h}(s)^{\top}\bar{Q}^{k}_h(s,\cdot,\cdot) \tilde{\nu}^{k}_{h} \\
        \ubar{V}^{k}_h(s) &= \tilde{\mu}^{k}_{h}(s)^{\top}\ubar{Q}^{k}_h(s,\cdot,\cdot) \nu^{k}_{h}
    \end{align}
    which ensures that $\bar{V}^{k}_h(s)$ reflects the best return the max-player can expect against a true adversary, rather than the min-player's exploration policy. To demonstrate the correctness of Strategic ULCB, we provide a bound on the total $\text{NashConv}$ loss incurred by the evaluation policies $\tilde{\mu}^{k}$ and $\tilde{\nu}^{k}$ over $K$ episodes, which we denote as $\text{Regret}(K)$,
    \begin{equation}
        \text{Regret}(K) = \sum_{k=1}^{K}\left[ \sup_{\mu}V^{\mu,\tilde{\nu}^k}_1(s_1) - \inf_{\nu}V^{\tilde{\mu}^k,\nu}_1(s_1)\right]
    \end{equation}
    We will show that $\text{Regret}(K) \leq O(\sqrt{K})$, such that the average $\text{NashConv}$ loss will decay as $O(1/\sqrt{K})$.
    
    \begin{theorem}
        \label{thm:strategic_ulcb_complexity}
        For any $K \geq 3$ and $\delta \geq 0$, if Strategic ULCB (Algorithm~\ref{alg:ulcb}) is run with $\beta_t$ defined as
        \begin{equation}
            \beta_t = H \sqrt{\frac{2\vert S\vert \ell}{t}}
        \end{equation}
        where $\ell = \ln(KH\vert S\vert\vert A\vert\vert B \vert / \delta)$, then its regret satisfies
        \begin{equation}
            \text{\emph{Regret}}(K) \leq 6\sqrt{2KH^{4}\vert S\vert^{2} \vert A\vert \vert B\vert\ell}
        \end{equation}
        with probability at least $1-\delta$.
    \end{theorem}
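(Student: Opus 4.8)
The plan is to follow the standard optimism-based regret template, with care taken for the one novel feature of Strategic ULCB: the two players' exploration strategies are minimax-optimal for \emph{different} matrices, $\mu^k_h(s)$ for $\bar Q^k_h(s,\cdot,\cdot)$ and $\nu^k_h(s)$ for $\ubar Q^k_h(s,\cdot,\cdot)$. Write $\Delta^k_h(s) := \bar V^k_h(s) - \ubar V^k_h(s) \ge 0$ for the surplus, and note that $\bar V^k_h(s) = \text{val}(\bar Q^k_h(s,\cdot,\cdot))$ and $\ubar V^k_h(s) = \text{val}(\ubar Q^k_h(s,\cdot,\cdot))$ are the minimax values of the respective matrix games. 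First I would define the event $\mathcal{E}$ on which $\|\hat P^k_h(\cdot\mid s,a,b) - P_h(\cdot\mid s,a,b)\|_1 \le \sqrt{2|S|\ell / N^k_h(s,a,b)}$ for every $(k,h,s,a,b)$ with $N^k_h(s,a,b)\ge 1$; by Weissman's $\ell_1$ deviation inequality and a union bound over the at most $KH|S||A||B|$ tuples and over the counts $\le K$, $\Pr(\mathcal{E})\ge 1-\delta/2$. The choice $\beta_t = H\sqrt{2|S|\ell/t}$ is exactly $H$ times this radius, so $\beta_t$ dominates every one-step transition-estimation error on value functions valued in $[0,H]$; rewards are deterministic and observed exactly, so there is no reward error, and for $t=0$ the clips to $H$ and $0$ in Algorithm~\ref{alg:ulcb} make $\bar Q^k_h - \ubar Q^k_h = H$, which I absorb with the convention $\beta_0:=H$.

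\textbf{Step 2 (optimism reduces regret to the surplus).} On $\mathcal{E}$, I would prove by backward induction on $h$ that $\sup_\mu V^{\mu,\tilde\nu^k}_h(s) \le \bar V^k_h(s)$ and $\inf_\nu V^{\tilde\mu^k,\nu}_h(s) \ge \ubar V^k_h(s)$ for all $k,s$. For the upper bound, let $\bar V^{\dagger}_h(s) := \sup_\mu V^{\mu,\tilde\nu^k}_h(s)$ and $Q^{\dagger}_h(s,a,b) := R_h(s,a,b) + P_h(s,a,b)^\top\bar V^{\dagger}_{h+1}$ be the max-player's best-response value and $Q$-function against $\tilde\nu^k$ in $G$; the induction hypothesis gives $\bar V^{\dagger}_{h+1}\le \bar V^k_{h+1}$, and on $\mathcal{E}$ together with the trivial bound $Q^{\dagger}_h\le H$ this yields $Q^{\dagger}_h(s,a,b)\le \min\{\hat R^k_h + \hat P^k_h{}^\top\bar V^k_{h+1} + \beta_{N^k_h(s,a,b)},\,H\} = \bar Q^k_h(s,a,b)$ (and $\le H = \bar Q^k_h$ for unobserved tuples). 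Since $\tilde\nu^k_h(s)$ is the min-player Nash strategy of $\bar Q^k_h(s,\cdot,\cdot)$, $\sup_\mu V^{\mu,\tilde\nu^k}_h(s) = \max_\mu \mu^\top Q^{\dagger}_h(s,\cdot,\cdot)\tilde\nu^k_h(s) \le \max_\mu \mu^\top\bar Q^k_h(s,\cdot,\cdot)\tilde\nu^k_h(s) = \text{val}(\bar Q^k_h(s,\cdot,\cdot)) = \bar V^k_h(s)$. The lower bound is symmetric, using that $\tilde\mu^k_h(s)$ is the max-player Nash strategy of $\ubar Q^k_h(s,\cdot,\cdot)$. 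Hence on $\mathcal{E}$ each summand of $\text{Regret}(K)$ is at most $\bar V^k_1(s_1)-\ubar V^k_1(s_1)$, i.e. $\text{Regret}(K)\le \sum_{k=1}^K\Delta^k_1(s_1)$.

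\textbf{Step 3 (surplus recursion).} The crucial observation is that although $\mu^k_h$ and $\nu^k_h$ optimize different matrices, the pair $(\mu^k_h(s),\nu^k_h(s))$ is precisely the exploration policy $\pi^k_h(s)$, and
\[
\Delta^k_h(s) = \text{val}(\bar Q^k_h(s,\cdot,\cdot)) - \text{val}(\ubar Q^k_h(s,\cdot,\cdot)) \le \mu^k_h(s)^\top\big(\bar Q^k_h(s,\cdot,\cdot) - \ubar Q^k_h(s,\cdot,\cdot)\big)\nu^k_h(s),
\]
because the max-Nash strategy of $\bar Q^k_h$ guarantees at least $\text{val}(\bar Q^k_h)$ against any opponent, while symmetrically $\text{val}(\ubar Q^k_h) \ge \mu^k_h(s)^\top\ubar Q^k_h(s,\cdot,\cdot)\nu^k_h(s)$. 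Using $\bar Q^k_h - \ubar Q^k_h \le 2\beta_{N^k_h(s,a,b)} + \hat P^k_h(s,a,b)^\top\Delta^k_{h+1}$ (the clips only help, and $\Delta^k_{h+1}\ge 0$), then replacing $\hat P^k_h$ by the true $P_h$ at an extra cost $\le\beta_{N^k_h(s,a,b)}$ on $\mathcal{E}$ (since $\|\Delta^k_{h+1}\|_\infty\le H$), gives $\Delta^k_h(s)\le \mathbb{E}_{(a,b)\sim\pi^k_h(s)}\big[3\beta_{N^k_h(s,a,b)} + P_h(s,a,b)^\top\Delta^k_{h+1}\big]$. Unrolling over $h=1,\dots,H$ by the tower property along trajectories generated by $\pi^k$ in $G$ yields $\Delta^k_1(s_1)\le 3\,\mathbb{E}_{\pi^k}\big[\sum_{h=1}^H\beta_{N^k_h(s^k_h,a^k_h,b^k_h)}\big]$.

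\textbf{Step 4 (summation) and the main obstacle.} Summing over $k$, the gap between $\mathbb{E}_{\pi^k}[\sum_h\beta_{N^k_h}]$ and the realized value $\sum_h\beta_{N^k_h(s^k_h,a^k_h,b^k_h)}$ is a bounded martingale-difference sequence in $k$, so Azuma contributes a lower-order term (costing the remaining $\delta/2$); then, grouping the realized sum by the tuple $(h,s,a,b)$, using $\sum_{t\le n}1/\sqrt t\le 2\sqrt n$ and Cauchy--Schwarz over the $\le H|S||A||B|$ tuples with $\sum n_{h,s,a,b}=KH$, one gets $\sum_{k,h}\beta_{N^k_h(s^k_h,a^k_h,b^k_h)}\le 2\sqrt{2KH^4|S|^2|A||B|\ell}$; multiplying by the factor $3$ from Step~3 and absorbing the lower-order terms into the constant (where $K\ge 3$ is used) gives $\text{Regret}(K)\le 6\sqrt{2KH^4|S|^2|A||B|\ell}$ with probability $\ge 1-\delta$. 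The step I expect to be hardest is getting Steps~2--3 to go through despite the two exploration strategies being minimax-optimal for the upper- versus lower-bound matrices: the load-bearing facts are (i) that the \emph{evaluation} policy $\tilde\nu^k$ is the min-player strategy paired with $\bar Q^k$ (and $\tilde\mu^k$ with $\ubar Q^k$), which is exactly what sandwiches the best-response values of the evaluation policies between $\ubar V^k$ and $\bar V^k$, and (ii) that $\text{val}(\bar Q^k_h(s,\cdot,\cdot)) - \text{val}(\ubar Q^k_h(s,\cdot,\cdot))\le \mu^k_h(s)^\top(\bar Q^k_h - \ubar Q^k_h)\nu^k_h(s)$ precisely because $\mu^k_h$ is $\bar Q$-optimal for the max-player and $\nu^k_h$ is $\ubar Q$-optimal for the min-player, so the surplus is controlled by an expectation under the actual exploration policy $\pi^k_h(s)$; after that, the value-iteration unrolling and pigeonhole are routine.
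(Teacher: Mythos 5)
Your proposal is correct and follows essentially the same route as the paper's proof: a Weissman-type $L_1$ concentration event making $\beta_t$ dominate the transition error (the paper's Lemma~A.1), backward induction sandwiching the best-response values of the evaluation policies $\tilde{\mu}^k,\tilde{\nu}^k$ between $\ubar{V}^k$ and $\bar{V}^k$ (Lemma~A.2), the same key inequality $\bar{V}^k_h-\ubar{V}^k_h \leq \mu^k_h(s)^{\top}\left(\bar{Q}^k_h-\ubar{Q}^k_h\right)\nu^k_h(s)$ exploiting that $\tilde{\nu}^k,\tilde{\mu}^k$ are best responses, and the same martingale-plus-pigeonhole summation. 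The only (harmless) differences are that you collect $3\beta$ per step rather than the paper's $4\beta$ by applying the model-error bound once to the difference $\bar{V}^k_{h+1}-\ubar{V}^k_{h+1}$, and that you state the optimism lemma for the evaluation policies $\tilde{\nu}^k,\tilde{\mu}^k$ --- which is what the theorem actually requires, and silently corrects the paper's Lemma~A.2, where these appear (apparently as a typo) as the exploration policies $\nu^k,\mu^k$.
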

    
    The full proof of Theorem~\ref{thm:strategic_ulcb_complexity} can be found in Appendix A, and is similar to the proof for the Optimistic ULCB given by~\cite{bai2020ulcb}.  We can sketch the main ideas of the proof by assuming $H=1$ (so we can ignore the state) and that $\ubar{Q}^{k}(a,b) \leq R(a,b) \leq\bar{Q}^{k}(a,b)$.  Because $\mu^{k} = \max_{a}\bar{Q}^{k}(a,\cdot)\tilde{\nu}^{k}$ and $\nu^{k} = \min_{b}(\tilde{\mu}^{k})^{\top}\ubar{Q}^{k}(\cdot,b)$, we have that $\bar{V}^{k} = \max_{a}\bar{Q}^{k}(a,\cdot)\tilde{\nu}^{k} \geq \max_{a}R(a,\cdot)\tilde{\nu}^{k}$, and $\ubar{V}^{k} = \min_{b}(\tilde{\mu}^{k})^{\top}\ubar{Q}^{k}(\cdot,b) \leq \min_{b}(\tilde{\mu}^{k})^{\top}R(a,\cdot)$.  Therefore, the $\text{NashConv}$ loss of the profile $(\tilde{\mu}^{k},\tilde{\nu}^{k})$ is bounded by $\bar{V}^{k} - \ubar{V}^{k}$.  Note that it is not possible to bound the loss of $(\mu^{k},\nu^{k})$ in the same way, and so the need for separate evaluation policies.  We then show that $\bar{V}^{k}$ and $\ubar{V}^{k}$ converge, by showing that they are bounded by the ``on policy'' confidence bounds $\tilde{V}^{k} = (\mu^{k})^{\top}\bar{Q}^{k}\nu^{k}$ and $\utilde{V}^{k} = (\mu^{k})^{\top}\bar{Q}^{k}\nu^{k}$, which do converge under the joint exploration policy. Note that $\tilde{V}^{k} = (\mu^{k})^{\top}\bar{Q}^{k}\nu^{k} \geq (\mu^{k})^{\top}\bar{Q}^{k}\tilde{\nu}^{k}$ because $\tilde{\nu}^{k}$ is also a best-response to $\mu^{k}$, with the same being true for $\tilde{V}^{k}$. 
    
    We can also show that, for the special case of games with deterministic transitions, Strategic ULCB will be strategically efficient with respect to the confidence sets $D_k$ of games that are exactly consistent the the rewards and state transitions observed up to episode $k$.
    \begin{theorem}
        \label{thm:strategic_ulcb_efficiency}
        Strategic-ULCB will be strategically efficient w.r.t. the confidence sets $D_{1\leq k}$ (Equation~\ref{eqn:version_space}) when run with $\beta_t = 0, \forall t$, on any game with deterministic state transitions.
    \end{theorem}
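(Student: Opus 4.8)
The plan is to verify Definition~\ref{def:strategic_efficiency} directly: for every episode $k$ I will exhibit a game $\tilde{G} \in D_k$ in which the max-player's exploration policy $\mu^k$ is a maximin strategy, and a (generally different) game $\utilde{G} \in D_k$ in which the min-player's exploration policy $\nu^k$ is a minimax strategy. Both games are built from the observed history $\mathbb{H}_k$ together with the enlarged state space $S \cup \{s^{*}\}$ and the wider reward range $[0,H]$ --- this is precisely what the absorbing state $s^{*}$ and the enlarged reward range in the definition of $\mathbb{G}$ are there to provide.

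For the max-player side, let $\tilde{G}$ agree with the true game $G$ on every triple $(h,s,a,b)$ observed before episode $k$ (the observed deterministic transition and observed reward there), and for every unobserved triple let $\tilde{G}$ transition deterministically to $s^{*}$ with a max-player reward equal to the optimistic value used to initialize $\bar{Q}$ on unvisited triples, with $s^{*}$ absorbing and zero reward. Then $\tilde{G}$ is deterministic, lives on $S \cup \{s^{*}\}$ with rewards in $[0,H]$, and is consistent with $\mathbb{H}_k$, so $\tilde{G} \in D_k$. The core step is a backward induction on $h = H, H-1, \dots, 1$ showing that the minimax value and action-value functions of $\tilde{G}$ equal the quantities Strategic ULCB computes with $\beta_t = 0$, i.e.\ $V^{*}_{\tilde{G},h}(s) = \bar{V}^k_h(s)$ and $Q^{*}_{\tilde{G},h}(s,a,b) = \bar{Q}^k_h(s,a,b)$ for all real $s$. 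For an unobserved triple both sides equal the optimistic initialization value by construction; for an observed triple leading to $s'$ the update is $\bar{Q}^k_h(s,a,b) = \min\{r_h + \bar{V}^k_{h+1}(s'),H\}$ while the Bellman backup in $\tilde{G}$ is $r_h + V^{*}_{\tilde{G},h+1}(s') = r_h + \bar{V}^k_{h+1}(s')$ by the inductive hypothesis, and these coincide once the truncation $\min\{\cdot,H\}$ is shown inert on such backups (discussed below). The passage from $Q$ to $V$ uses that $\mathrm{Nash}(\bar{Q}^k_h(s,\cdot,\cdot),-\bar{Q}^k_h(s,\cdot,\cdot))$ returns a saddle point of the zero-sum matrix game with payoff $\bar{Q}^k_h(s,\cdot,\cdot)$, so $\bar{V}^k_h(s) = \mu^k_h(s)^{\top}\bar{Q}^k_h(s,\cdot,\cdot)\tilde{\nu}^k_h = \mathrm{val}(\bar{Q}^k_h(s,\cdot,\cdot)) = V^{*}_{\tilde{G},h}(s)$. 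With this identification, $\mu^k_h(s)$ is the max-player's half of a saddle point of $Q^{*}_{\tilde{G},h}(s,\cdot,\cdot)$ at every $(h,s)$, and the standard backward-induction (Shapley) argument gives that committing to $\mu^k$ guarantees the game value from every state: $\inf_\nu V^{\mu^k,\nu}_{\tilde{G},1}(s_1) \ge V^{*}_{\tilde{G},1}(s_1) = \sup_\mu\inf_\nu V^{\mu,\nu}_{\tilde{G},1}(s_1)$, the first condition of Definition~\ref{def:strategic_efficiency}.

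The min-player side is symmetric, and in fact cleaner: let $\utilde{G}$ agree with $G$ on observed triples and send every unobserved triple to $s^{*}$ with max-player reward $0$ (the pessimistic initialization for $\ubar{Q}$). The same induction gives $Q^{*}_{\utilde{G},h} = \ubar{Q}^k_h$ and $V^{*}_{\utilde{G},h} = \ubar{V}^k_h$ --- here the floor $\max\{\cdot,0\}$ is automatically inert, since the true rewards and the $\ubar{V}^k$ values are nonnegative --- and since $\nu^k_h(s)$ is the min-player's half of a saddle point of $\ubar{Q}^k_h(s,\cdot,\cdot)$ we obtain $\sup_\mu V^{\mu,\nu^k}_{\utilde{G},1}(s_1) \le V^{*}_{\utilde{G},1}(s_1) = \inf_\nu\sup_\mu V^{\mu,\nu}_{\utilde{G},1}(s_1)$, the second condition. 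Membership $\tilde{G},\utilde{G}\in D_k$ has already been checked; moreover $D_{1 \le k}$ are $0$-confidence sets w.r.t.\ the deterministic games (the true $G$ is deterministic with rewards in $[0,1]\subseteq[0,H]$, so $G\in D_k$ for all $k$), and the required convergence of these confidence sets under data generated by Strategic ULCB is inherited from the analysis behind Theorem~\ref{thm:strategic_ulcb_complexity} (the on-policy confidence widths collapse, which bounds the $\mathrm{NashConv}$ of the evaluation policies uniformly over $D_k$).

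The main obstacle is exactly the claim that $Q^{*}_{\tilde{G}} = \bar{Q}^k$ \emph{exactly} --- that the truncation $\min\{\cdot,H\}$ never binds along a backup through an observed transition --- so that the optimistic value functions are realizable, not merely approximable, as the minimax value functions of a game in $D_k$; without this, $\mu^k$ (a saddle strategy of the \emph{truncated} matrix) need not be a saddle strategy of the realized matrix, and the Shapley step fails. The clean way to secure it is the horizon-aware optimistic initialization $\bar{Q}^k_h \equiv H-h+1$ on unvisited triples (equivalently, route unobserved triples into $s^{*}$ with reward $H-h+1$): an easy induction then gives $\bar{V}^k_{h+1}(s') \le H-h$, hence $r_h + \bar{V}^k_{h+1}(s') \le H-h+1 \le H$ and the truncation is inert. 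Under a flat initialization at $H$ one must instead propagate, top-down along observed transitions, the reduced target values forced at truncated states --- scaling down the synthetic rewards on unobserved transitions while keeping $\mu^k_h(s)$ a saddle-point strategy at each state --- which is the routine-but-fiddly part of the argument, and is the precise place where the enlarged reward range and absorbing state supply the slack the construction needs.
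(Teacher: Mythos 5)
Your proof takes essentially the same route as the paper's: the existence lemma in Appendix~B (Lemma~\ref{lem:existence}) constructs exactly your witness game --- agree with the true game on observed triples, route each unobserved triple deterministically to the absorbing state $s^{*}$ with a one-time reward of $H$ --- to realize $\bar{Q}^{k}$ and $\bar{V}^{k}$ as exact minimax values of some game in $D_k$, and then the same backward-induction saddle-point argument shows $\mu^{k}$ (resp.\ $\nu^{k}$) is an equilibrium component of that game. Your closing worry about the truncation $\min\{\cdot,H\}$ is well taken rather than an unnecessary precaution: the paper's lemma writes $\bar{Q}^{k}_h = R_{G,h} + P_{G,h}^{\top}\bar{V}^{k}_{h+1}$ for observed triples with no clip, which fails exactly when $r^{k}_h + \bar{V}^{k}_{h+1}(s') > H$ (e.g.\ an observed positive reward leading into a state all of whose actions are still unvisited), so your horizon-aware initialization (or the top-down rescaling alternative) is a genuine repair of a step the paper glosses over.
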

    The proof of Theorem~\ref{thm:strategic_ulcb_efficiency} can be found in Appendix B. The restriction to deterministic games is necessary, as without it $\bar{Q}^k$ and $\ubar{Q}^k$ may not be exactly realizable for any plausible game, which is essential for the proof.  For stochastic games, the bonus terms $\beta_t$ will be approximations of the true upper and lower bounds over the space of statistically plausible games (to see this, consider the value of $\bar{Q}^{k}_h(s,a,b)$ when $\bar{V}^{k}_{h+1} = 0$).  Therefore, Strategic ULCB is will only be \emph{approximately} strategically efficient in stochastic games.
    
    \subsection{Model-Free Algorithms}
    
    \cite{bai2020nash_q} present Optimistic Nash-Q as model-free counterpart to Optimistic ULCB. Optimistic Nash-Q maintains tabular estimates of the upper and lower bounds $\bar{Q}_h$ and $\ubar{Q}_h$ analogous to those used in Optimistic ULCB, but which are updated online via a Q-learning update, rather than being recomputed at each episode under the current model.  We can extend Strategic ULCB to the model-free case in much the same way, defining the current exploration and evaluation policies as
    \begin{align}
        \mu^{k}_{h}(s), \tilde{\nu}^{k}_h(s) &= \text{Nash}(\bar{Q}_h(s,\cdot,\cdot)) \\
        \tilde{\mu}^{k}_{h}(s), \tilde{\nu}^{k}_h(s) &= \text{Nash}(\ubar{Q}_h(s,\cdot,\cdot))
    \end{align}
    and updating the value function bounds as
    \begin{align}
        \bar{V}^{k}_h(s) &= \mu^{k}_{h}(s)^{\top} \bar{Q}_h(s,\cdot,\cdot) \tilde{\nu}^{k}_h(s) \\
        \ubar{V}^{k}_h(s) &= \tilde{\mu}^{k}_{h}(s)^{\top} \ubar{Q}_h(s,\cdot,\cdot) \tilde{\nu}^{k}_h(s)
    \end{align}
    Like Optimistic Nash-Q, Strategic Nash-Q recomputes the policies and value function bounds for the current state after the $\bar{Q}_h$ and $\ubar{Q}_h$ are for the current state and action.  We provide the pseudocode for Optimistic Nash-Q in Appendix C, and for Strategic Nash-Q in Appendix D.

\section{Experiments}
    \label{experiments}
    
    \begin{figure}[ht]
        \centering
        \includegraphics[width=0.7\linewidth]{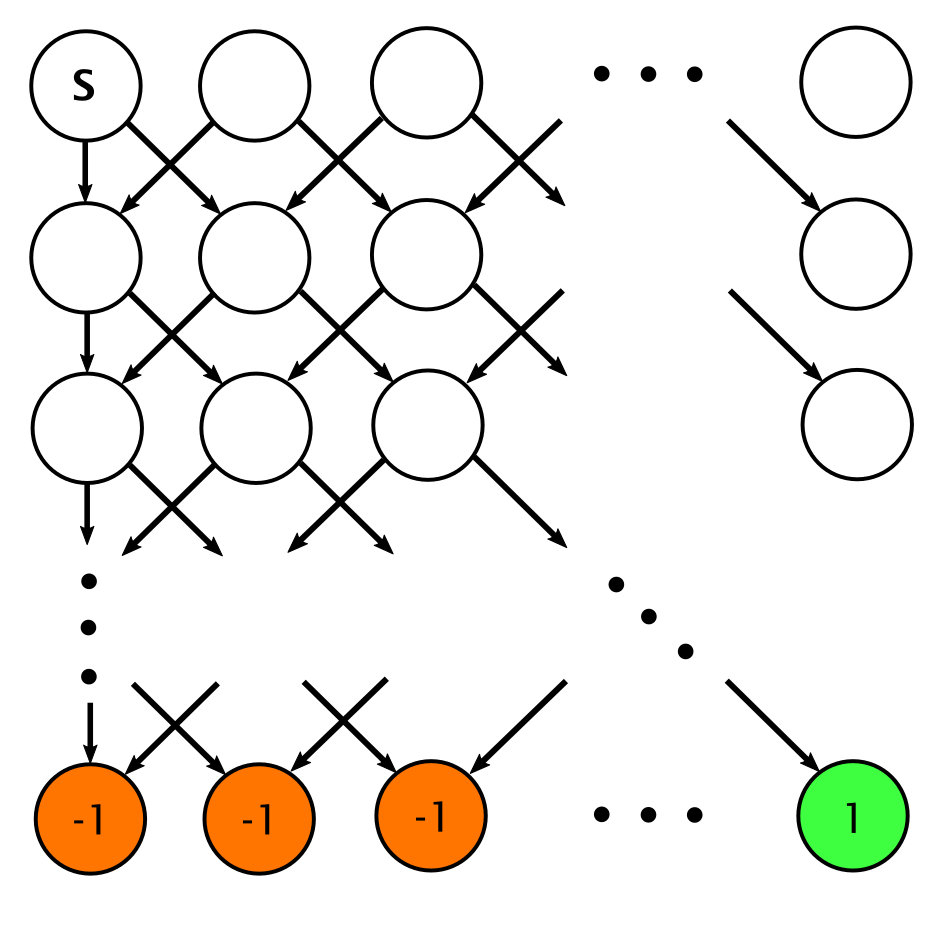}
        \caption{The $k \times k$ deep-sea task.  The player always starts in the state marked ``S''.  To reach the goal state, the player must move right for $k-1$ steps.}
        \label{fig:deep_sea}
    \end{figure}
    
    \begin{figure*}[t]
        \centering
        \begin{subfigure}[b]{0.32\textwidth}
            \caption{}
            \includegraphics[width=\textwidth]{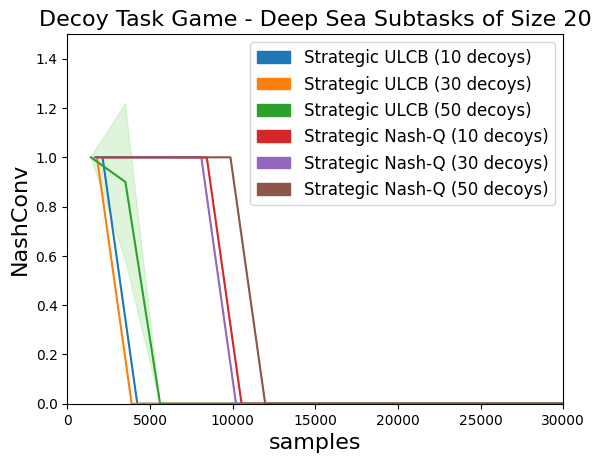}
            \label{fig:decoy_strategic_20}
        \end{subfigure}
        \begin{subfigure}[b]{0.32\textwidth}
            \caption{}
            \includegraphics[width=\textwidth]{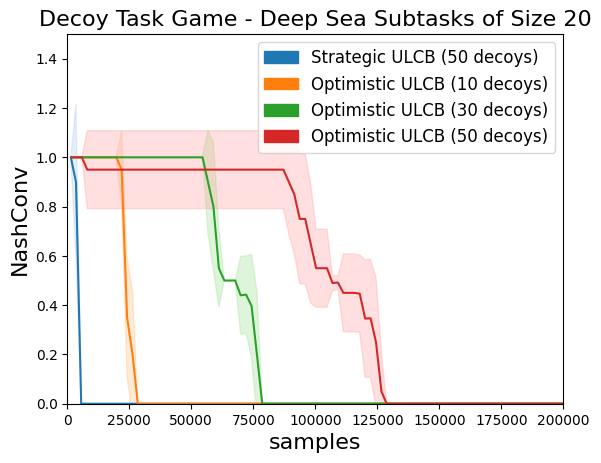}
            \label{fig:decoy_ulcb_20}
        \end{subfigure}
        \begin{subfigure}[b]{0.32\textwidth}
            \caption{}
            \includegraphics[width=\textwidth]{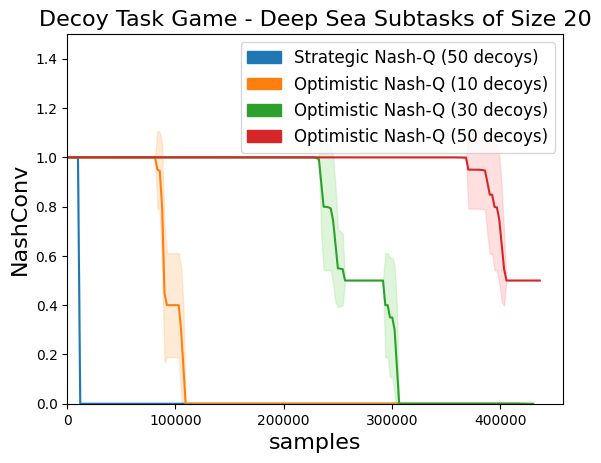}
            \label{fig:decoy_nash_q_20}
        \end{subfigure}

        \caption{Comparisons between Strategic ULCB, Strategic Nash-Q and their optimistic counterparts on decoy task games with deep-sea sub-tasks of varying sizes. Shows the $\text{NashConv}$ loss, with zero corresponding to the point where the game has been solved.  Error bars show standard deviations over 10 game instances.}
        \label{fig:decoy_deep_sea}
    \end{figure*}
    
    \begin{figure}[t]
        \centering
        \begin{subfigure}[b]{0.8\columnwidth}
            \caption{}
            \includegraphics[width=\textwidth]{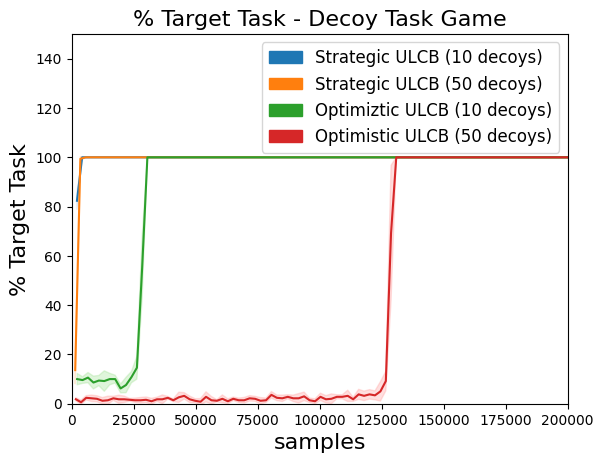}
            \label{fig:decoy_ulcb_counts_20}
        \end{subfigure}
        \begin{subfigure}[b]{0.8\columnwidth}
            \caption{}
            \includegraphics[width=\textwidth]{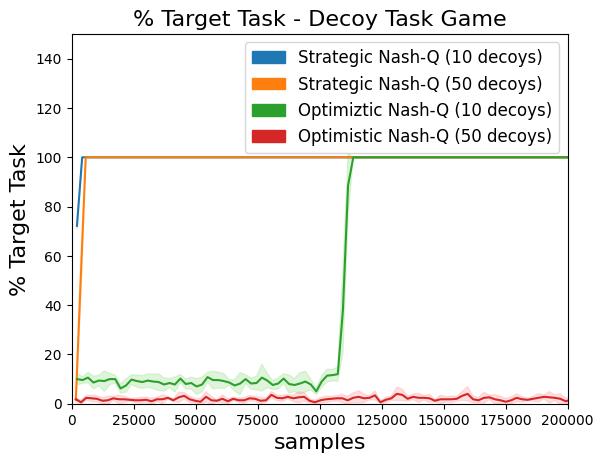}
            \label{fig:decoy_nash_q_counts_20}
        \end{subfigure}
        \caption{Comparisons between Strategic ULCB, Strategic Nash-Q and their optimistic counterparts on decoy task games with deep-sea sub-tasks.  Shows the percentage of episodes per iteration that explore the target task.  Error bars show standard deviations over 5 games.}
        \label{fig:decoy_deep_sea_counts}
    \end{figure}
    
    In this section, we compare Strategic ULCB and Strategic Nash-Q against Optimistic ULCB and Nash-Q, as well as Independent Q-learning.  To highlight the impact of strategically efficient exploration on sample complexity, we first present results in a version of the decoy task game (Figure~\ref{fig:decoy_game}).  We will demonstrate that, as the number of strategically irrelevant decoy tasks increases, so too do the advantages of Strategic ULCB and Nash-Q over the alternatives.  We also evaluate these algorithms on a set of randomly generated turn-based games, to demonstrate the value of strategic exploration in much more general settings.
    
    \subsection{Algorithms}
    \label{baselines}
    
    The fact that Strategic ULCB and Nash-Q define separate evaluation policies could give them an unfair advantage over the Optimistic baseline algorithms.  To provide a fair comparison between these approaches, we therefore modify Optimistic ULCB and Nash-Q to compute \emph{pessimistic} evaluation policies $\tilde{\mu}^{k}$ and $\tilde{\nu}^{k}$ a
    \begin{align}
        \mu, \tilde{\nu}^{k}_h(s) &= \text{Nash}(\bar{Q}^{k}_h(s,\cdot,\cdot)) \\
        \tilde{\mu}^{k}_h(s), \nu &= \text{Nash}(\ubar{Q}^{k}_h(s,\cdot,\cdot))
    \end{align}
    where $\bar{Q}^{k}_h$ and $\ubar{Q}^{k}_h$ are the upper and lower confidence bounds maintained by each algorithm.  These policies correspond to each player maximizing their expected return in the worst plausible case.  Note that existing $\text{NashConv}$ regret bounds for Optimistic Nash-Q only apply to a complex, non-stationary mixture of the exploration policies $\mu^{k}_h$ and $\nu^{k}_h$.  In these experiments, however, the $\text{NashConv}$ loss is computed for the most recent values of the evaluation policies $\tilde{\mu}^{k}$ and $\tilde{\nu}^{k}$.  We also note that, as our experiments are conducted in alternating move games, the computation of equilibrium strategies for each state reduces to a simple maximization problem over the actions for the current player.
    
    \paragraph{Independent Q-Learning}
    
    Additionally, we compare against a learner that trains by running two independent instances of tabular Q-learning against one another.  While this approach is not guaranteed to solve a Markov game, the use of independent Q-learning (IQL) has historically proven successful in some multi-agent settings~\citep{tan1993multi,tesauro1994td}.  In these experiments, we optimistically initialize the Q-function estimates for each learner to their maximum possible return $H$, which means that both learners engage in optimistic exploration in much the same way that Optimistic Nash-Q does, but without explicit coordination between the learners.  Like Optimistic ULCB and Nash-Q, each Q-learner maintains a separate evaluation policy based on a separate, pessimistically initialized Q-function.
    
    \paragraph{Hyper-parameters}
    
    For Strategic and Optimistic ULCB, the only hyperparameter that needs to be defined is the exploration bonus $\beta_t$ (Strategic and Optimistic Nash-Q require this parameter as well).  For our deterministic environments, however, we can set $\beta_t = 0$ for all $t$.  For Optimistic and Strategic Nash-Q (as well as independent Q-learning), we set the learning rate $\alpha_t = \frac{H + 1}{H + t}$, where $t = N^{k}_h(s,a,b)$ ($t = N^{k}_h(s,a)$ for IQL), the theoretically justified value which proved reliable in practice~\citep{jin2018qlearning,bai2020nash_q}. For IQL, we found empirically that using $\epsilon$-greedy exploration (in addition to optimistic exploration) led to better performance, with $\epsilon=0.05$ being most effective.  Code and instructions for reproducing these experiments is available at: \url{https://github.com/microsoft/strategically_efficient_rl}

    \subsection{Decoy Task Games}
    \label{decoy_task}

    To demonstrate advantage of strategically efficient exploration, we first evaluate Strategic ULCB and Strategic Nash-Q on instances of the decoy task game, illustrated in Figure~\ref{fig:decoy_game}.  The challenge for exploration in these games is the tendency of the \emph{decoy} tasks to distract algorithms that explore without regard for the adversarial nature of the game.  These games are representative of the broader class of two-player zero sum games in which the bulk of the state space is strategically irrelevant, that is, it does not need to be explored to find an equilibrium solution. To understand the impact of such irrelevant states, we consider games with a single target task, but varying numbers of decoy tasks.  To keep rewards normalized in $[0,1]$, we modify the payoff structure shown in Figure~\ref{fig:decoy_game} such that a max-player loss corresponds to a max-player reward of 0, and a tie a reward of $\sfrac{1}{2}$.  Each decoy and target task is separate (solving one does not help the learner solve the others), and we also randomize the the index of the action leading to the target task.
    
    \subsubsection{Deep Sea Sub-Task}
    
    In our experiments with decoy task games, both the target and decoy tasks are instances of the \emph{deep sea} environment~\citep{osband2019exploration,osband2019bsuite}.  We chose the deep sea environment as it is specifically designed to be difficult to solve using simple exploration strategies such as $\epsilon$-greedy, while being reliably solved using count-based exploration bonuses or optimistic initialization of the value function.  The deep sea environment (Figure~\ref{fig:deep_sea}) is an $n \times n$ grid of states, with the initial state in the top-left corner, and the goal state in the bottom right corner.  The player moves down, to the left or right, at each step, and to reach the goal, the player must go right for $n - 1$ steps.  For large $n$, random action selection will have a very small probability of reaching the goal.  The use of instances of the deep sea environment as target and decoy tasks in the decoy task game leads to a task for which efficient exploration is essential, but the naive application of single-agent exploration mechanisms perform poorly when there are a large number of decoy tasks.  This combination is therefore ideal for evaluating the strategic efficiency of a learning algorithm.
    
    \subsubsection{Decoy Task Game Results}
    \label{decoy_task_results}

    \begin{figure*}[t]
        \centering
        \begin{subfigure}[b]{0.32\textwidth}
            \caption{}
            \includegraphics[width=\textwidth]{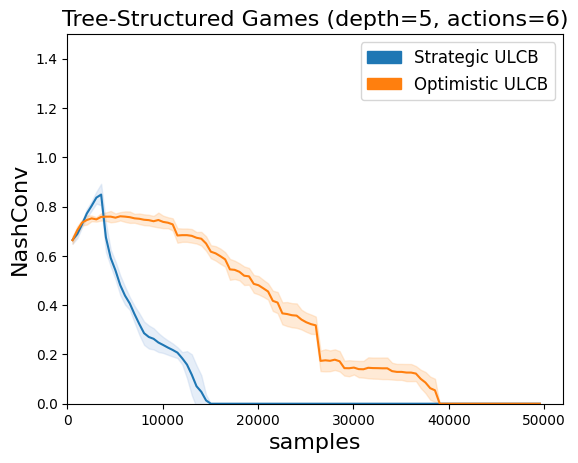}
            \label{fig:tree_5_6_ulcb}
        \end{subfigure}
        \begin{subfigure}[b]{0.32\textwidth}
            \caption{}
            \includegraphics[width=\textwidth]{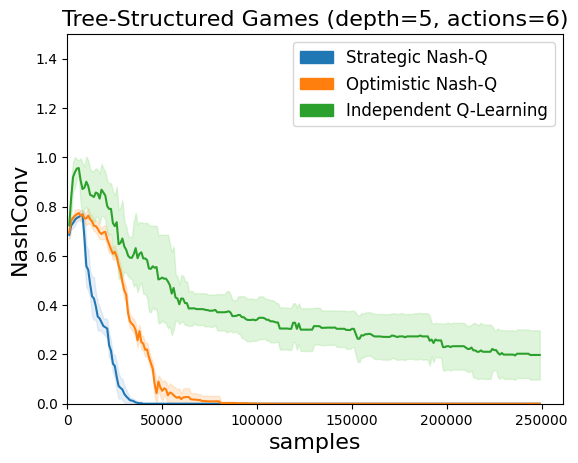}
            \label{fig:tree_5_6_nash_q}
        \end{subfigure}
        \begin{subfigure}[b]{0.32\textwidth}
            \caption{}
            \includegraphics[width=\textwidth]{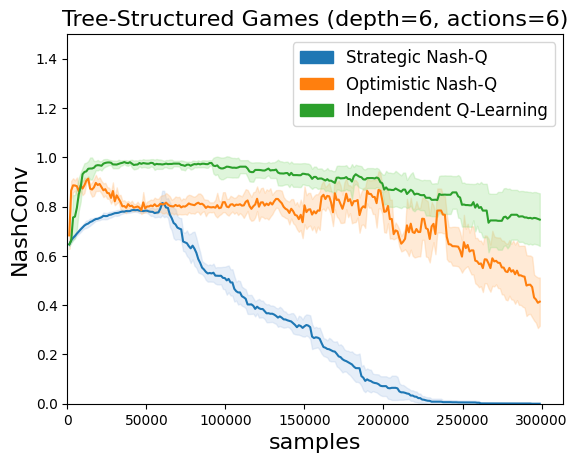}
            \label{fig:tree_6_6_nash_q}
        \end{subfigure}
        \caption{Comparisons of Strategic-ULCB and Strategic-Nash-Q against their optimistic counterparts and IQL, in tree-structured, alternating move games with randomly generated payoffs.  Shows the $\text{NashConv}$ loss, with zero corresponding to the point where the game has been solved.  Error bars show standard deviation over 10 randomly generated game instances.}
        \label{fig:tree_game}
    \end{figure*}
    
    Figure~\ref{fig:decoy_deep_sea} shows a set of comparisons between Strategic ULCB and Strategic Nash-Q against their optimistic counterparts, with pessimistic evaluation policies, in several instances of the the decoy task game.  We compare these algorithms on instances with 10, 30 and 50 decoy tasks, where both target and decoy sub-tasks are instances of the 20x20 deep sea environment.  Figure~\ref{fig:decoy_strategic_20} shows that, in terms of the $\text{NashConv}$ loss, the performance of Strategic ULCB and Strategic Nash-Q, is largely insensitive to the number of decoy tasks.  In contrast, Figures~\ref{fig:decoy_ulcb_20} and~\ref{fig:decoy_nash_q_20} show that the number of samples required for Optimistic ULCB and Optimistic Nash-Q to solve the game (where the loss goes to zero) grows roughly proportionately with the number of decoy tasks.  For clarity of presentation, Figures~\ref{fig:decoy_ulcb_20} and~\ref{fig:decoy_nash_q_20} only show the performance of Strategic ULCB and Strategic Nash-Q for the most difficult case with 50 decoys, which is nonetheless significantly better than that of the optimistic algorithms for even the easiest case with 10 decoys.  As expected, the model-based ULCB algorithms are more sample-efficient than their model-free counterparts.
    
    These results are consistent with our hypothesis that the strategically efficient algorithms will be able to quickly recognize that for any decoy task, the best return the max-player can expect against a rational opponent would be $\sfrac{1}{2}$, and so will prioritize solving the target instance of the deep-sea sub-task.  We can see this behavior in Figure~\ref{fig:decoy_strategic_20}, where Strategic ULCB and Nash-Q take slightly longer to solve games with more decoys, corresponding to the time required to determine that each decoy task is irrelevant.  To further support this hypothesis, in Figure~\ref{fig:decoy_deep_sea_counts} we show the percentage of episodes in which each algorithm explored the target task.  We can see that Strategic ULCB and Nash-Q concentrate exploration on solving the target task much more quickly than Optimistic ULCB and Nash-Q, which waste time attempting to solve each decoy task.

    \subsection{Tree-Structured Games}
    \label{random_tree}
    
    While strategically efficient exploration has a dramatic impact on performance in the decoy task game, we can also show that it can have a significant impact on performance in much more general classes of games.  In this section, we evaluate Strategic ULCB and Strategic Nash-Q in a space of tree-structured, alternating move games, where, for each random game instance, the max-player rewards for each terminal state is drawn from the uniform distribution over $[0,1]$.  While there are no states in these games that are designated as being strategically irrelevant, we can nonetheless bound the plausible return each player can guarantee from a given state without knowing the payoffs of all terminal states reachable from that state.  Strategic exploration may therefore still be beneficial, if it can prioritize states for which a strong adversary policy has not yet been identified.
    
    We consider alternating-move games of depth 5 and 6, with either 5 or 6 actions available to the active player in each state.  Figure~\ref{fig:tree_5_6_ulcb} compares the average performance of Strategic and Optimistic ULCB in solving games of depth 5 with 6 actions per state, where Strategic ULCB has a clear advantage in how fast its $\text{NashConv}$ loss converges to zero.  In Figures~\ref{fig:tree_5_6_nash_q} and~\ref{fig:tree_6_6_nash_q}, Strategic Nash-Q shows an advantage over Optimistic Nash-Q (as well as independent Q-learning) in games of depth 5 and 6, with Strategic Nash-Q having a large advantage over the alternatives for games of depth 6.  
    
\section{Conclusion}
    
    Reducing sample complexity will be critical if reinforcement learning is to see widespread use in solving real-world problems, particularly for tasks that involve interaction between multiple agents.  Here we have considered approaches to exploration in competitive multi-agent tasks, and have shown that the use of \emph{strategically efficient} exploration mechanisms can significantly reduce sample complexity relative to non-strategic, optimistic mechanisms.  We have presented novel, strategically efficient reinforcement learning algorithms for finite Markov games, and demonstrated that they can be significantly more sample efficient than their optimistic counterparts in challenging exploration games, while preserving the same sample complexity guarantees as existing approaches across all possible games.
    
    While this work is limited to Markov games with small, finite state and action spaces, the concept of strategically efficient exploration can be applied to games with infinite state and action spaces.  Future work would focus on the development of strategically efficient algorithms that are compatible with the use of function approximation.  The finite algorithms developed in this work may serve as the basis for strategically efficient alternatives to existing frameworks for deep multi-agent RL.  Future theoretical work would seek to extend the notion of strategic efficiency to $n$-player general-sum games, and games with imperfect information.

\begin{acknowledgements}

    We would like to thank Akshay Krishnamurthy for his valuable feedback during the development of this work.

\end{acknowledgements}

\bibliography{references}

\begin{thebibliography}{21}
\providecommand{\natexlab}[1]{#1}
\providecommand{\url}[1]{\texttt{#1}}
\expandafter\ifx\csname urlstyle\endcsname\relax
  \providecommand{\doi}[1]{doi: #1}\else
  \providecommand{\doi}{doi: \begingroup \urlstyle{rm}\Url}\fi

\bibitem[Bai and Jin(2020)]{bai2020ulcb}
Yu~Bai and Chi Jin.
\newblock Provable self-play algorithms for competitive reinforcement learning.
\newblock In \emph{International Conference on Machine Learning}, pages
  551--560. PMLR, 2020.

\bibitem[Bai et~al.(2020)Bai, Jin, and Yu]{bai2020nash_q}
Yu~Bai, Chi Jin, and Tiancheng Yu.
\newblock Near-optimal reinforcement learning with self-play.
\newblock \emph{Advances in Neural Information Processing Systems}, 33, 2020.

\bibitem[Berner et~al.(2019)Berner, Brockman, Chan, Cheung, Debiak, Dennison,
  Farhi, Fischer, Hashme, Hesse, Józefowicz, Gray, Olsson, Pachocki, Petrov,
  de~Oliveira~Pinto, Raiman, Salimans, Schlatter, Schneider, Sidor, Sutskever,
  Tang, Wolski, and Zhang]{berner2019dota}
Christopher Berner, Greg Brockman, Brooke Chan, Vicki Cheung, Przemyslaw
  Debiak, Christy Dennison, David Farhi, Quirin Fischer, Shariq Hashme, Chris
  Hesse, Rafal Józefowicz, Scott Gray, Catherine Olsson, Jakub Pachocki,
  Michael Petrov, Henrique~Pondé de~Oliveira~Pinto, Jonathan Raiman, Tim
  Salimans, Jeremy Schlatter, Jonas Schneider, Szymon Sidor, Ilya Sutskever,
  Jie Tang, Filip Wolski, and Susan Zhang.
\newblock Dota 2 with large scale deep reinforcement learning.
\newblock \emph{CoRR}, 2019.

\bibitem[B{\"o}hmer et~al.(2019)B{\"o}hmer, Rashid, and
  Whiteson]{bohmer2019intrinsic}
Wendelin B{\"o}hmer, Tabish Rashid, and Shimon Whiteson.
\newblock Exploration with unreliable intrinsic reward in multi-agent
  reinforcement learning.
\newblock \emph{arXiv preprint arXiv:1906.02138}, 2019.

\bibitem[Burda et~al.(2018)Burda, Edwards, Storkey, and Klimov]{burda2018rnd}
Yuri Burda, Harrison Edwards, Amos Storkey, and Oleg Klimov.
\newblock Exploration by random network distillation.
\newblock In \emph{International Conference on Learning Representations}, 2018.

\bibitem[Burda et~al.(2019)Burda, Edwards, Pathak, Storkey, Darrell, and
  Efros]{burda2018curiosity}
Yuri Burda, Harri Edwards, Deepak Pathak, Amos Storkey, Trevor Darrell, and
  Alexei~A. Efros.
\newblock Large-scale study of curiosity-driven learning.
\newblock In \emph{International Conference on Learning Representations}, 2019.

\bibitem[Iqbal and Sha(2019)]{iqbal2019coordinated}
Shariq Iqbal and Fei Sha.
\newblock Coordinated exploration via intrinsic rewards for multi-agent
  reinforcement learning.
\newblock \emph{arXiv preprint arXiv:1905.12127}, 2019.

\bibitem[Jaksch et~al.(2010)Jaksch, Ortner, and Auer]{jaksch2010ucrl}
Thomas Jaksch, Ronald Ortner, and Peter Auer.
\newblock Near-optimal regret bounds for reinforcement learning.
\newblock \emph{Journal of Machine Learning Research}, 11:\penalty0 1563--1600,
  2010.

\bibitem[Jin et~al.(2018)Jin, Allen-Zhu, Bubeck, and Jordan]{jin2018qlearning}
Chi Jin, Zeyuan Allen-Zhu, Sebastien Bubeck, and Michael~I Jordan.
\newblock Is q-learning provably efficient?
\newblock In \emph{Advances in Neural Information Processing Systems}, pages
  4863--4873, 2018.

\bibitem[Johanson et~al.(2011)Johanson, Waugh, Bowling, and
  Zinkevich]{johanson2011nashconv}
Michael Johanson, Kevin Waugh, Michael Bowling, and Martin Zinkevich.
\newblock Accelerating best response calculation in large extensive games.
\newblock In \emph{IJCAI}, volume~11, pages 258--265, 2011.

\bibitem[Lanctot et~al.(2017)Lanctot, Zambaldi, Gruslys, Lazaridou, Tuyls,
  P{\'e}rolat, Silver, and Graepel]{lanctot2017psro}
Marc Lanctot, Vinicius Zambaldi, Audrunas Gruslys, Angeliki Lazaridou, Karl
  Tuyls, Julien P{\'e}rolat, David Silver, and Thore Graepel.
\newblock A unified game-theoretic approach to multiagent reinforcement
  learning.
\newblock In \emph{Advances in neural information processing systems}, pages
  4190--4203, 2017.

\bibitem[Littman(1994)]{littman1994markov}
Michael~L Littman.
\newblock Markov games as a framework for multi-agent reinforcement learning.
\newblock In \emph{Machine learning proceedings 1994}, pages 157--163.
  Elsevier, 1994.

\bibitem[Osband et~al.(2019)Osband, Van~Roy, Russo, and
  Wen]{osband2019exploration}
Ian Osband, Benjamin Van~Roy, Daniel~J Russo, and Zheng Wen.
\newblock Deep exploration via randomized value functions.
\newblock \emph{Journal of Machine Learning Research}, 20\penalty0
  (124):\penalty0 1--62, 2019.

\bibitem[Osband et~al.(2020)Osband, Doron, Hessel, Aslanides, Sezener, Saraiva,
  McKinney, Lattimore, Szepesvari, Singh, Roy, Sutton, Silver, and
  Hasselt]{osband2019bsuite}
Ian Osband, Yotam Doron, Matteo Hessel, John Aslanides, Eren Sezener, Andre
  Saraiva, Katrina McKinney, Tor Lattimore, Csaba Szepesvari, Satinder Singh,
  Benjamin~Van Roy, Richard Sutton, David Silver, and Hado~Van Hasselt.
\newblock Behaviour suite for reinforcement learning.
\newblock In \emph{International Conference on Learning Representations}, 2020.

\bibitem[Pathak et~al.(2017)Pathak, Agrawal, Efros, and Darrell]{pathak2017icm}
Deepak Pathak, Pulkit Agrawal, Alexei~A Efros, and Trevor Darrell.
\newblock Curiosity-driven exploration by self-supervised prediction.
\newblock In \emph{Proceedings of the IEEE Conference on Computer Vision and
  Pattern Recognition Workshops}, pages 16--17, 2017.

\bibitem[Pearl(1980)]{pearl1980alpha_beta}
Judea Pearl.
\newblock Asymptotic properties of minimax trees and game-searching procedures.
\newblock \emph{Artificial Intelligence}, 14\penalty0 (2):\penalty0 113--138,
  1980.

\bibitem[Strehl and Littman(2008)]{strehl2008mbie}
Alexander~L Strehl and Michael~L Littman.
\newblock An analysis of model-based interval estimation for markov decision
  processes.
\newblock \emph{Journal of Computer and System Sciences}, 74\penalty0
  (8):\penalty0 1309--1331, 2008.

\bibitem[Tan(1993)]{tan1993multi}
Ming Tan.
\newblock Multi-agent reinforcement learning: Independent vs. cooperative
  agents.
\newblock In \emph{Proceedings of the tenth international conference on machine
  learning}, pages 330--337, 1993.

\bibitem[Tesauro(1994)]{tesauro1994td}
Gerald Tesauro.
\newblock Td-gammon, a self-teaching backgammon program, achieves master-level
  play.
\newblock \emph{Neural computation}, 6\penalty0 (2):\penalty0 215--219, 1994.

\bibitem[Vinyals et~al.(2019)Vinyals, Babuschkin, Czarnecki, Mathieu, Dudzik,
  Chung, Choi, Powell, Ewalds, Georgiev, et~al.]{vinyals2019alphastar}
Oriol Vinyals, Igor Babuschkin, Wojciech~M Czarnecki, Micha{\"e}l Mathieu,
  Andrew Dudzik, Junyoung Chung, David~H Choi, Richard Powell, Timo Ewalds,
  Petko Georgiev, et~al.
\newblock Grandmaster level in starcraft ii using multi-agent reinforcement
  learning.
\newblock \emph{Nature}, 575\penalty0 (7782):\penalty0 350--354, 2019.

\bibitem[Weissman et~al.(2003)Weissman, Ordentlich, Seroussi, Verdu, and
  Weinberger]{weissman2003error}
Tsachy Weissman, Erik Ordentlich, Gadiel Seroussi, Sergio Verdu, and Marcelo~J
  Weinberger.
\newblock Inequalities for the l1 deviation of the empirical distribution.
\newblock \emph{Hewlett-Packard Labs, Tech. Rep}, 2003.

\end{thebibliography}

\begin{appendices}

\section{Proof of Theorem 4.1}
\label{apx:strategic_uclb_regret}

Our proof of regret bounds for Strategic ULCB is very similar to the proof for the regret bounds of Optimistic ULCB~\citep{bai2020ulcb}.  The key difference is that we need to show that the tighter confidence bounds maintained by Strategic ULCB still constrain the exploitability of the evaluation policies $\tilde{\mu}^{k}$ and $\tilde{\nu}^{k}$ (Lemma~\ref{lem:strategic_confidence_bounds}), and that these confidence bounds still converge under our exploration policies.  We also directly bound the $L_1$ error of the transition model, which leads to a somewhat simpler proof, and helps us better understand the nature of the confidence sets that Strategic ULCB implicitly maintains.

\begin{lemma}
    \label{lem:model_error}
    For a given $K \geq 3$ and $\delta > 0$, define $\beta_t$ as
    \begin{equation}
        \label{eqn:formal_beta}
        \beta_t = H \sqrt{\frac{2\left[\vert S\vert \ln(KH\vert S\vert\vert A\vert\vert B \vert / \delta)\right]}{t}}
    \end{equation}
    then with probability at least $1 - \delta$, for all $k \in [K]$, $h \in H$, $s \in S_h$, $a \in A_{h,s}$ and $b \in B_{h,s}$, and for all $V \in [0,H]^{\vert S \vert}$, we have
    \begin{equation}
        \label{eqn:model_bound}
        \left\vert \hat{P}^{k}_h(s,a,b)^{\top}V - P(s,a,b)^{\top}V\right\vert \leq \beta_t
    \end{equation}
    for $t = N^{k}_h(s,a,b)$.
\end{lemma}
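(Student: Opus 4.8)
The plan is to reduce the "for all $V \in [0,H]^{|S|}$" statement to a single $L_1$ bound on the empirical transition vector, establish that bound via a standard concentration inequality for empirical distributions, and then union bound over all triples and visit counts. First I would observe that $\hat P^{k}_h(s,a,b)$ and $P(s,a,b)$ are both probability vectors, so their difference is orthogonal to the all-ones vector; hence for any $V$ and any constant $c$, $(\hat P^{k}_h - P)^\top V = (\hat P^{k}_h - P)^\top(V - c\mathbf{1})$. Taking $c = H/2$ puts $V - c\mathbf{1}$ in $[-H/2,H/2]^{|S|}$, and a short argument (the supremum is attained by putting mass $H$ exactly on the coordinates where $\hat P^{k}_h > P$) gives $\sup_{V \in [0,H]^{|S|}} \lvert(\hat P^{k}_h - P)^\top V\rvert = \tfrac{H}{2}\,\lVert \hat P^{k}_h(s,a,b) - P(s,a,b)\rVert_1$. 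Therefore \eqref{eqn:model_bound} holds for all $V \in [0,H]^{|S|}$ simultaneously exactly when $\lVert \hat P^{k}_h(s,a,b) - P(s,a,b)\rVert_1 \le 2\beta_t/H = 2\sqrt{2|S|\ell/t}$, and it suffices to prove this $L_1$ bound with probability $1-\delta$ (for $t = N^{k}_h(s,a,b) \ge 1$; when $t = 0$ the claim is vacuous).

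Next, I would fix a triple $(h,s,a,b)$ and an integer $\tau \ge 1$ and argue that the empirical distribution formed from the first $\tau$ visits to $(h,s,a,b)$ is the empirical distribution of $\tau$ i.i.d.\ draws from $P(\cdot \mid s,a,b)$, supported on the at most $|S|$ reachable next states. This is the one step requiring care: the episodes and steps at which $(h,s,a,b)$ is visited are chosen adaptively by a history-dependent exploration policy, but the environment draws the next state i.i.d.\ from the fixed kernel conditioned only on the current $(h,s,a,b)$, so the sequence of next states observed on successive visits to a given triple is i.i.d.\ regardless of the policy; conditioning on the (random) visit schedule does not affect this. I would then invoke the classical $L_1$ deviation bound for empirical distributions (Weissman et al.), $\Pr\{\lVert \hat P_\tau - P\rVert_1 \ge \epsilon\} \le 2^{|S|} e^{-\tau \epsilon^2/2}$, with $\epsilon = 2\sqrt{2|S|\ell/\tau}$, and verify $2^{|S|} e^{-\tau\epsilon^2/2} = 2^{|S|} e^{-4|S|\ell} \le e^{-\ell} = \delta/(KH|S||A||B|)$, which follows from $|S| \ge 1$ and $\ell = \ln(KH|S||A||B|/\delta) \ge \ln 2$, since then $4|S|\ell \ge |S|\ln 2 + \ell$. (An alternative route avoiding the Weissman bound: bound $\mathrm{E}\lVert \hat P_\tau - P\rVert_1 \le \sqrt{|S|/\tau}$ by Cauchy--Schwarz, note that altering one sample changes $\lVert \hat P_\tau - P\rVert_1$ by at most $2/\tau$, and apply McDiarmid's inequality; the constants are slightly different but the same $\beta_t$ works.)

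Finally I would take a union bound: there are at most $H|S||A||B|$ distinct triples, and over $K$ episodes the count $N^{k}_h(s,a,b)$ can only ever take a value in $\{1,\dots,K\}$, so applying the previous paragraph with failure probability $\delta/(KH|S||A||B|)$ for each (triple, $\tau$) pair yields total failure probability at most $\delta$. On the complementary event, for every $k,h,s,a,b$ with $t = N^{k}_h(s,a,b) \ge 1$ we have $\lVert \hat P^{k}_h(s,a,b) - P(s,a,b)\rVert_1 \le 2\sqrt{2|S|\ell/t}$, and the reduction in the first paragraph converts this to \eqref{eqn:model_bound} for all $V \in [0,H]^{|S|}$. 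The only genuine obstacle is the adaptivity point in the second paragraph; the rest is a routine concentration-and-union-bound calculation, and the looseness in the choice of $\beta_t$ (the factor $4|S|\ell$ versus the needed $|S|\ln 2 + \ell$) leaves ample slack.
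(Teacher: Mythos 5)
Your proposal is correct and follows essentially the same route as the paper's proof: reduce the uniform-in-$V$ statement to an $L_1$ bound on $\hat{P}^{k}_h(s,a,b) - P(s,a,b)$, apply the Weissman et al.\ concentration inequality for empirical distributions, and union bound over tuples and visit counts. Your version is in fact slightly tidier --- the centering trick giving the factor $H/2$ rather than $H$, the explicit handling of adaptively-timed visits, and the explicit numerical verification are all refinements the paper elides (and you use the correct exponent $-\tau\epsilon^{2}/2$ in the Weissman bound, where the paper's display has a typo) --- but the underlying argument is the same.
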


\begin{proof}
    When $N^{k}_h(s,a,b) = 0$, Equation~\ref{eqn:model_bound} holds trivially as $\beta_t = \infty$.    Otherwise, we can apply the well known bound on the $L_1$ error of an empirical distribution due to~\cite{weissman2003error} to show that
    \begin{multline}
        \text{Pr}\left\{ \Vert \hat{P}^{k}_h(s,a,b) - P(s,a,b) \Vert_1 \geq \epsilon \right\} \leq \\ (2^{\vert S\vert} - 2)\exp\{-N^{k}_h(s,a,b)\frac{\epsilon}{2}\}
    \end{multline}
    Note that, for all $V \in [0,H]^{\vert S \vert}$
    \begin{multline}
        \vert \hat{P}^{k}_h(s,a,b)^{\top} V - P(s,a,b)^{\top} V \vert \leq \\ H \Vert \hat{P}^{k}_h(s,a,b) - P(s,a,b) \Vert_1
    \end{multline}
    and so for $t = N^{k}_h(s,a,b)$ and $\beta_t$ defined according to Equation~\ref{eqn:formal_beta} we therefore have
    \begin{multline}
        \text{Pr} \left\{ \exists V, \vert \hat{P}^{k}_h(s,a,b)^{\top} V - P(s,a,b)^{\top} V \vert \geq \beta_t \right\} \leq \\ \frac{\delta}{KH\vert S\vert\vert A\vert\vert B \vert}
    \end{multline}
    Taking the union bound over $k$, $h$, $s$, $a$ and $b$ yields the desired result.
\end{proof}

For games with deterministic transitions, $\hat{P}^{k}_h(s,a,b) = P(s,a,b)$ whenever $N^{k}_h(s,a,b) > 0$, and so Equation~\ref{eqn:model_bound} will hold even for $\beta_t = 0$, which is the value we use for our experiments in deterministic games.  We can now show that our confidence bounds $\bar{V}^{k}_{h}$ and $\ubar{V}^{k}_{h}$ not only constrain the value of the game at each state, but also bound the exploitability of our evaluation policies $\tilde{\mu}^{k}$ and $\tilde{\nu}^{k}$.

\begin{lemma}
    \label{lem:strategic_confidence_bounds}
    When Strategic-ULCB is run with $\beta_t$ as defined in Equation~\ref{eqn:formal_beta}, the for all $k \in [K]$, $h \in H$ and $s \in S_h$, we have
    \begin{align}
        \bar{V}^{k}_h(s) &\geq \sup_{\mu}V^{\mu,\nu^{k}}_h(s) \\
        \ubar{V}^{k}_h(s) &\leq \inf_{\nu}V^{\mu^{k},\nu}_h(s)
    \end{align}
    with probability at least $1 - \delta$.
\end{lemma}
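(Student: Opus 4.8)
The plan is to prove both inequalities simultaneously by backward induction on $h$, from $h=H+1$ down to $h=1$, following the structure of the Optimistic ULCB regret proof but exploiting the best-response structure of the Nash value updates. The two claims are mirror images---one bounding the max-player's upper value, the other the min-player's lower value---so I would carry out $\bar{V}^{k}_h(s) \ge \sup_{\mu} V^{\mu,\nu^{k}}_h(s)$ in detail and obtain $\ubar{V}^{k}_h(s) \le \inf_{\nu} V^{\mu^{k},\nu}_h(s)$ by interchanging the two players and swapping $\bar{Q}^{k}$ for $\ubar{Q}^{k}$. Throughout I would condition on the probability-$(1-\delta)$ event of Lemma~\ref{lem:model_error}.

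The first step converts the model-error bound into one-sided Bellman inequalities for the clamped estimates. Since $\bar{V}^{k}_{h+1}$ is a convex combination of values already clamped to $[0,H]$, it lies in $[0,H]^{\vert S\vert}$, so Lemma~\ref{lem:model_error} gives $\hat{P}^{k}_h(s,a,b)^{\top}\bar{V}^{k}_{h+1} + \beta_t \ge P_h(s,a,b)^{\top}\bar{V}^{k}_{h+1}$; combined with $\hat{R}^{k}_h = R_h$ on observed triples (and with $\bar{Q}^{k}_h = H$ holding trivially when $N^{k}_h(s,a,b)=0$, where $\beta_t=\infty$) this yields $\bar{Q}^{k}_h(s,a,b) \ge \min\{ R_h(s,a,b) + P_h(s,a,b)^{\top}\bar{V}^{k}_{h+1},\, H\}$. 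The mirrored computation gives $\ubar{Q}^{k}_h(s,a,b) \le \max\{ R_h(s,a,b) + P_h(s,a,b)^{\top}\ubar{V}^{k}_{h+1},\, 0\}$.

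For the induction I would set $W_h(s) := \sup_{\mu} V^{\mu,\nu^{k}}_h(s)$, which satisfies the max-player Bellman optimality recursion against the fixed opponent $\nu^{k}$, namely $W_h(s) = \max_{a}\sum_{b}\nu^{k}_h(s,b)\,[R_h(s,a,b) + P_h(s,a,b)^{\top}W_{h+1}]$ with $W_{H+1}\equiv 0$; the base case is then immediate. Assuming $\bar{V}^{k}_{h+1}\ge W_{h+1}$ componentwise, the inequality for $\bar{Q}^{k}_h$ together with $R_h + P_h^{\top}W_{h+1} \le H$ (so the clamp at $H$ is inactive on the dominated side) gives $\bar{Q}^{k}_h(s,a,b) \ge R_h(s,a,b) + P_h(s,a,b)^{\top}W_{h+1}$ for every $(a,b)$. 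The key step is the exact best-response identity supplied by the Nash value update: because $\bar{V}^{k}_h(s)$ pairs $\mu^{k}_h(s)$ with the min-player Nash component of the matrix game $\bar{Q}^{k}_h(s,\cdot,\cdot)$, and $\mu^{k}_h(s)$ is a best response to that component, one has the equality $\bar{V}^{k}_h(s) = \max_{a}\sum_{b}\nu^{k}_h(s,b)\,\bar{Q}^{k}_h(s,a,b)$, not merely an inequality. Averaging the preceding bound over $b\sim\nu^{k}_h(s)$ and maximizing over $a$ then gives $\bar{V}^{k}_h(s)\ge W_h(s)$, closing the induction; the lower bound follows symmetrically with $U_h(s):=\inf_{\nu}V^{\mu^{k},\nu}_h(s)$ and the min-player Bellman recursion.

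I expect the main obstacle to be exactly this best-response identity. It is what forces the minimax value $\bar{V}^{k}_h(s)$ to coincide with the max-player's best-response value against the opponent policy that appears in the statement, and hence to dominate $W_h$; the delicate point is ensuring the opponent policy entering the exploitability claim is the Nash partner used to form $\bar{V}^{k}_h$, so that the $\max_a$ can be pulled out as an equality rather than only a lower bound. The remaining ingredients---the clamping bookkeeping at $0$ and $H$, membership of the value vectors in $[0,H]^{\vert S\vert}$, and the union bound already carried out in Lemma~\ref{lem:model_error}---are routine.
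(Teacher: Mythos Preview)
Your approach is exactly the paper's: backward induction on $h$, lifting Lemma~\ref{lem:model_error} to $\bar{Q}^{k}_h(s,a,b)\ge R_h(s,a,b)+P_h(s,a,b)^{\top}\bar{V}^{k}_{h+1}$, and then closing the induction via the best-response identity coming from the Nash update. The concern you flag at the end is the real one, and as written it is a genuine gap: the min-player Nash component of the matrix game $\bar{Q}^{k}_h(s,\cdot,\cdot)$ is $\tilde{\nu}^{k}_h(s)$, \emph{not} $\nu^{k}_h(s)$; in Strategic ULCB, $\nu^{k}_h(s)$ is the min-player Nash component of the \emph{other} matrix $\ubar{Q}^{k}_h(s,\cdot,\cdot)$. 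Hence your key identity $\bar{V}^{k}_h(s)=\max_{a}\sum_{b}\nu^{k}_h(s,b)\,\bar{Q}^{k}_h(s,a,b)$ is false as written---indeed, since $\tilde{\nu}^{k}_h(s)$ is the minimax min-player strategy for $\bar{Q}^{k}_h$, any other mixture, including $\nu^{k}_h(s)$, yields $\max_a(\cdot)\ge \bar{V}^{k}_h(s)$, which is the wrong direction for your induction.

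The fix is simply that the lemma, and your $W_h$, should be stated for the evaluation policy $\tilde{\nu}^{k}$ rather than the exploration policy $\nu^{k}$; the paper's own statement and proof carry the same $\nu^{k}/\tilde{\nu}^{k}$ typo, but note that $\text{Regret}(K)$ in the main text is defined in terms of $\tilde{\mu}^{k},\tilde{\nu}^{k}$, so this is what Theorem~\ref{thm:strategic_ulcb_complexity} actually needs. With $\tilde{\nu}^{k}$ substituted for $\nu^{k}$ throughout, your argument goes through verbatim and coincides with the paper's.
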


\begin{proof}
    For each $k \in [K]$ we prove this by induction on $h$.  We will only show the proof for the upper bound, as the proof for the lower bound is symmetric.  Assume that for some $h \in [H]$ we have, for all $s \in S_h$
    \begin{equation}
        \label{eqn:bound_assumption}
        \bar{V}^{k}_{h+1}(s) \geq \sup_{\mu}V^{\mu,\nu^{k}}_{h+1}(s)
    \end{equation}
    By Lemma~\ref{lem:model_error}, Equation~\ref{eqn:model_bound} will hold simultaneously for all $k$, $h$, $s$, $a$ and $b$ with probability at least $1-\delta$, and so when $N^{k}_h(s,a,b) > 0$, we have
    \begin{align}
        \label{eqn:q_bound}
        \bar{Q}^{k}_h(s,a,b) &= \hat{R}^{k}_h(s,a,b) + \hat{P}^{k}_h(s,a,b)\bar{V}^{k}_{h+1} + \beta_t \\
        &\geq R(s,a,b) + \hat{P}^{k}_h(s,a,b)\bar{V}^{k}_{h+1}  \\
        &\geq R(s,a,b) + P(s,a,b) \sup_{\mu}V^{\mu,\nu^{k}}_{h+1} \\
        &= \sup_{\mu}Q^{\mu,\nu^{k}}_{h}(s,a,b)
    \end{align}
    where the $t = N^{k}_h(s,a,b)$, and the first inequality also uses the fact that $\hat{R}^{k}_h(s,a,b) = R(s,a,b)$ when $N^{k}_h(s,a,b) > 0$.  When $N^{k}_h(s,a,b) = 0$, Equation~\ref{eqn:q_bound} holds trivially, as $\bar{Q}^{k}_h(s,a,b) = H$.  By the definition of $\bar{V}^{k}_{h+1}(s)$, we then have
    \begin{align}
        \bar{V}^{k}_{h+1}(s) - \sup_{\mu}V^{\mu,\nu^{k}}_h(s) &= \mu^{k}_h(s)^{\top} \bar{Q}^{k}_h(s,\cdot,\cdot)\tilde{\nu}^{k}_h(s) \\ 
        & - \max_{a \in A_{h,s}}\sup_{\mu}Q^{\mu,\nu^{k}}_h(s,a,\cdot)\tilde{\nu}^{k}_h(s) \\
        &\geq \mu^{k}_h(s)^{\top} \bar{Q}^{k}_h(s,\cdot,\cdot)\tilde{\nu}^{k}_h(s) \\
        & - \max_{a \in A_{h,s}}\sup_{\mu}Q^{k}_h(s,a,\cdot)\tilde{\nu}^{k}_h(s) \\
        &= 0
    \end{align}
    which proves the inductive step.  The first inequality follows directly from Equation~\ref{eqn:q_bound}, while the second inequality follows from the fact that $(\mu^{k}_h(s),\tilde{\nu}^{k}_h(s))$ for a Nash equilibrium of the matrix game defined by $\bar{Q}^{k}_h(s,\cdot,\cdot)$, and so $\mu^{k}_h(s)$ is a best-response to $\tilde{\nu}^{k}_h(s)$ under $\bar{Q}^{k}_h(s,\cdot,\cdot)$.  Finally, we can see that Equation~\ref{eqn:bound_assumption} holds trivially for $h = H+1$, where we implicitly assume that $\bar{V}^{k}_{h}(s) = \sup_{\mu}V^{\mu,\nu^{k}}_{h}(s) = 0$, which concludes the proof.
\end{proof}

Lemma~\ref{lem:strategic_confidence_bounds} will be sufficient to prove Theorem~\ref{thm:strategic_ulcb_complexity} and bound the $\text{NashConv}$ regret of the evaluation policies $\tilde{\mu}^k$ and $\tilde{\nu}^k$.  The remainder of the proof will closely follow the proof for Optimistic ULCB given by~\cite{bai2020ulcb}, with slight modifications to account for the presence of separate exploration and evaluation policies.

\begin{proof}[Proof of Theorem~\ref{thm:strategic_ulcb_complexity}]

We begin with the definition of the $\text{NashConv}$ regret
\begin{equation}
    \text{Regret}(K) = \sum^{K}_{k = 1} \sup_{\mu}V^{\mu,\nu^{k}}_1(s_1) - \inf_{\nu}V^{\mu^{k},\nu}_1(s_1)
\end{equation}
for any $k \in [K]$ and $h \in [H]$, we have 
\begin{align}
    & \sup_{\mu}V^{\mu,\nu^{k}}_h(s^{k}_h) - \inf_{\nu}V^{\mu^{k},\nu}_h(s^{k}_h) \\
    &\leq \bar{V}^{k}_h(s^{k}_h) - \ubar{V}^{k}_h(s^{k}_h) \\
    &= \mu^{k}_h(s^{k}_h)^{\top} \bar{Q}^{k}_h(s^{k}_h,\cdot,\cdot) \tilde{\nu}^{k}_h(s^{k}_h) - \tilde{\mu}^{k}_h(s^{k}_h)^{\top} \ubar{Q}^{k}_h(s^{k}_h,\cdot,\cdot) \nu^{k}_h(s^{k}_h) \\
    &\leq \mu^{k}_h(s^{k}_h)^{\top} \bar{Q}^{k}_h(s^{k}_h,\cdot,\cdot) \nu^{k}_h(s^{k}_h) - \mu^{k}_h(s^{k}_h)^{\top} \ubar{Q}^{k}_h(s^{k}_h,\cdot,\cdot) \nu^{k}_h(s^{k}_h) \\
    &= \mu^{k}_h(s^{k}_h)^{\top} \left[\bar{Q}^{k}_h(s^{k}_h,\cdot,\cdot) - \ubar{Q}^{k}_h(s^{k}_h,\cdot,\cdot) \right] \nu^{k}_h(s^{k}_h)
\end{align}
where the first inequality follows from Lemma~\ref{lem:strategic_confidence_bounds}, while the second follow from the fact that $\tilde{\mu}^{k}$ and $\tilde{\nu}^{k}$ are best responses, and so changing to the optimistic strategies $\mu^{k}$ and $\nu^{k}$ can only increase the width of the confidence interval.  We can decompose the last term as
\begin{align}
    &\mu^{k}_h(s^{k}_h)^{\top} \left[\bar{Q}^{k}_h(s^{k}_h,\cdot,\cdot) - \ubar{Q}^{k}_h(s^{k}_h,\cdot,\cdot) \right] \nu^{k}_h(s^{k}_h) \\
    &= \left[\bar{Q}^{k}_h - \ubar{Q}^{k}_h\right](s^{k}_h,a^{k}_h,b^{k}_h) + \xi^{k}_h  \\
    &= \hat{P}^{k}_h(s^{k}_h,a^{k}_h,b^{k}_h)^{\top}\left[\bar{V}^{k}_{h+1} - \ubar{V}^{k}_{h+1}\right] + + 2\beta^{k}_h + \xi^{k}_h \\
    &= P(s^{k}_h,a^{k}_h,b^{k}_h)^{\top}\left[\bar{V}^{k}_h - \ubar{V}^{k}_h\right] + 4\beta^{k}_h + \xi^{k}_h \\
    &= \left[\bar{V}^{k}_{h+1} - \ubar{V}^{k}_{h+1}\right](s^{k}_{h+1}) + \zeta^{k}_h + 4\beta^{k}_h + \xi^{k}_h
\end{align}
where $\beta^{k}_h = \beta_t$ for $t = N^{k}_h(s,a,b)$.  The terms $\xi^{k}_h$ and $\zeta^{k}_h$ are defined as
\begin{align}
    \xi^{k}_h &= \text{E}_{a,b \sim \mu^{k}_h(s^{k}_h),\nu^{k}_h(s^{k}_h)}\left[\bar{Q}^{k}_h - \ubar{Q}^{k}_h\right](s^{k}_h,a,b) \\ 
    & \hspace{.6cm} - \left[\bar{Q}^{k}_h - \ubar{Q}^{k}_h\right](s^{k}_h,a^{k}_h,b^{k}_h) \\
    \zeta^{k}_h &= \text{E}_{s \sim P(s^{k}_h,a^{k}_h,b^{k}_h)}\left[\bar{V}^{k}_{h+1} - \ubar{V}^{k}_{h+1}\right](s) \\ 
    & \hspace{.6cm} - \left[\bar{V}^{k}_{h+1} - \ubar{V}^{k}_{h+1}\right](s^{k}_{h+1}) \\
\end{align}
Here $\xi^{k}_h$ and $\zeta^{k}_h$ are not i.i.d., but the sequences of their partial sums over $k$ and $h$ are martingales, and so by the Azuma-Hoeffding inequality
\begin{align}
    \sum_{k = 1}^{K}\sum_{h = 1}^{H} \xi^{k}_h &\leq \sqrt{2KH^{3}\ln{\frac{1}{\delta}}} \\
    \sum_{k = 1}^{K}\sum_{h = 1}^{H} \zeta^{k}_h &\leq \sqrt{2KH^{3}\ln{\frac{1}{\delta}}}
\end{align}
we then have
\begin{align}
    &\sum^{K}_{k = 1} \sup_{\mu}V^{\mu,\nu^{k}}_1(s_1) - \inf_{\nu}V^{\mu^{k},\nu}_1(s_1) \\
    &\leq \sum^{K}_{k = 1}\left[\bar{V}^{k}_h(s^{k}_1) - \ubar{V}^{k}_1(s^{k}_1)\right] \\
    &\leq \sum^{K}_{k = 1}\sum^{H}_{h=1} \left[4\beta^{k}_h + \xi^{k}_h + \zeta^{k}_h \right]
\end{align}
For $\beta^{k}_h$ we have
\begin{align}
    \sum^{K}_{k = 1}\sum^{H}_{h=1} \beta^{k}_h &= C \sum^{H}_{h=1} \sum_{s \in S_h}\sum_{a \in A_{h,s}}\sum_{b \in B_{h,s}}\sum_{t=1}^{N^{K}_h(s,a,b)} \frac{1}{\sqrt{t}} \\
    &\leq \sqrt{K H^{2} \vert S\vert \vert A\vert \vert B\vert}
\end{align}
by the Cauchy-Schwarz inequality, where
\begin{equation}
    C = \sqrt{2H^{2}\vert S\vert \ln(KH\vert S\vert\vert A\vert\vert B \vert / \delta)}
\end{equation}
finally, this gives us
\begin{align}
    &\sum^{K}_{k = 1}\sum^{H}_{h=1} \left[4\beta^{k}_h + \xi^{k}_h + \zeta^{k}_h \right] \\
    &\leq 4\sqrt{2KH^{4}\vert S\vert^{2} \vert A\vert \vert B\vert\ln(KH\vert S\vert\vert A\vert\vert B \vert / \delta)} + 2\sqrt{2KH^{3}\ln{\frac{1}{\delta}}} \\
    &\leq 6\sqrt{2KH^{4}\vert S\vert^{2} \vert A\vert \vert B\vert\ln(KH\vert S\vert\vert A\vert\vert B \vert / \delta)}
\end{align}
which completes the proof.
\end{proof}

\section{Proof of Theorem 4.2}
\label{apx:strategic_ulcb_efficiency}

We prove~\ref{thm:strategic_ulcb_efficiency} for the max-player's exploration strategy $\mu^{k}$ only, as the proof for the min-player's strategy is symmetric.  We first show that the upper bounds $\bar{V}^{k}_h$ and $\bar{Q}^{k}_h$ can always be achieved for some game in $D_k$.

\begin{lemma}
    \label{lem:existence}
    At each episode $k$, there exists a game $G \in D_k$ such that the upper confidence bounds $\bar{V}^{k}$ and $\bar{Q}^{k}_h$ computed by Strategic-ULCB for $\beta_t = 0$ satisfy
    \begin{align}
        \label{eqn:v_existence}
        \bar{V}^{k}_h(s) = \sup_{\mu}\inf_{\nu}V^{\mu,\nu}_{G,h}(s) \\
        \label{eqn:q_existence}
        \bar{Q}^{k}_h(s, a, b) = \sup_{\mu}\inf_{\nu}Q^{\mu,\nu}_{G,h}(s, a, b)
    \end{align}
    for all $h \in [H]$ and $s \in S_h$, and $a \in A_{h,s}$ or $b \in B_{h,s}$.
\end{lemma}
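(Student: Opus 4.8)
The plan is to exhibit an explicit game $G \in D_k$ whose minimax value function coincides, state by state, with the optimistic bounds $\bar V^k$ and $\bar Q^k_h$, and then verify the two identities by a single downward induction on $h$. \emph{Construction.} On every tuple $(h,s,a,b)$ occurring in $\mathbb{H}_k$ (that is, with $N^k_h(s,a,b) > 0$), let $G$ use the true reward and transition, $R_{G,h}(s,a,b) = R_h(s,a,b) = \hat R^k_h(s,a,b)$ and $P_{G,h}(s,a,b) = P_h(s,a,b)$; because the true game is deterministic, every visit to $(s,a,b)$ leads to the same successor, so the empirical transition $\hat P^k_h(s,a,b)$ maintained by the algorithm equals $P_h(s,a,b)$ exactly; thus $G$ is consistent with all observed data ($G \in D_k$) and is deterministic. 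On every unobserved tuple, send $G$ deterministically to the absorbing state $s^*$, $P_{G,h}(s,a,b) = \delta_{s^*}$, and set $R_{G,h}(s,a,b) = \bar Q^k_h(s,a,b)$; this is legal since $\bar Q^k_h(s,a,b) \in [0,H]$, which is precisely why the definition of $\mathbb{G}(S,A,B)$ was given with the enlarged reward range $[0,H]$ and an extra absorbing state. Finally make $s^*$ absorbing with zero reward, so that $V^{\mu,\nu}_{G,h}(s^*) = 0$ for every $h,\mu,\nu$.

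\emph{Induction.} Write $V^\star_{G,h} = \sup_\mu \inf_\nu V^{\mu,\nu}_{G,h}$ and $Q^\star_{G,h} = \sup_\mu \inf_\nu Q^{\mu,\nu}_{G,h}$ for the minimax value functions of $G$. The base case $h = H+1$ is the convention $\bar V^k_{H+1} \equiv 0 \equiv V^\star_{G,H+1}$. Assume $\bar V^k_{h+1}(s) = V^\star_{G,h+1}(s)$ for all $s \in S_{h+1}$. For the $Q$-identity at step $h$: if $(h,s,a,b)$ is unobserved, routing to $s^*$ and $V^\star_{G,h+1}(s^*)=0$ give $Q^\star_{G,h}(s,a,b) = R_{G,h}(s,a,b) = \bar Q^k_h(s,a,b)$ at once; if $(h,s,a,b)$ is observed with successor $s'$, then using $G \in D_k$ and then the inductive hypothesis, $Q^\star_{G,h}(s,a,b) = R_h(s,a,b) + V^\star_{G,h+1}(s') = \hat R^k_h(s,a,b) + \hat P^k_h(s,a,b)^\top \bar V^k_{h+1}$, which equals $\bar Q^k_h(s,a,b)$ provided the truncation $\min\{\cdot,H\}$ in the update is inactive there. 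The $V$-identity at step $h$ then drops out of the $Q$-identity via the minimax theorem: by the Shapley recursion for finite-horizon zero-sum Markov games $V^\star_{G,h}(s) = \mathrm{val}\big(Q^\star_{G,h}(s,\cdot,\cdot)\big)$ with $\mathrm{val}$ the (unique) value of the one-shot matrix game, while $\mathrm{Nash}(\bar Q^k_h(s,\cdot,\cdot))$ in Algorithm~\ref{alg:ulcb} returns a profile whose payoff under $\bar Q^k_h(s,\cdot,\cdot)$ is $\mathrm{val}\big(\bar Q^k_h(s,\cdot,\cdot)\big)$, so $\bar V^k_h(s) = \mathrm{val}\big(\bar Q^k_h(s,\cdot,\cdot)\big)$; the two payoff matrices agree by the $Q$-identity, hence so do their values.

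\emph{Main obstacle.} The one place that requires care is the parenthetical above: one must show the truncation $\min\{\cdot, H\}$ in the $\bar Q$-update is never active along an observed transition, so that $\bar Q^k_h(s,a,b)$ really equals the untruncated Bellman backup that $G$ realises. I would establish this by an auxiliary downward induction bounding the optimistic value function, showing that no single-step backup of an observed tuple exceeds $H$, using that per-step rewards lie in $[0,1]$ together with standard properties of the optimistic initialisation of the unexplored state-actions; after which, for $\beta_t = 0$ on a deterministic game, $\bar Q^k_h(s,a,b) = \hat R^k_h(s,a,b) + \hat P^k_h(s,a,b)^\top \bar V^k_{h+1}$ holds with equality. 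The remaining ingredients — $\hat R^k_h,\hat P^k_h$ agreeing with the true reward and transition on observed tuples, uniqueness of the matrix-game value, and the stagewise decomposition of the minimax value — are standard. Once proved, the lemma certifies that $\mu^k$ is the max-player component of a Nash equilibrium of a plausible game $G \in D_k$, which is exactly what the proof of Theorem~\ref{thm:strategic_ulcb_efficiency} requires.
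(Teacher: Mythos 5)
Your construction and induction are essentially the paper's: put the true (deterministic) reward and transition on every observed tuple, route every unobserved tuple to the absorbing state $s^*$ with a step reward equal to its optimistic value $H$ (legal because rewards in $\mathbb{G}$ range over $[0,H]$), and run a downward induction in which the $Q$-identity gives the $V$-identity via the uniqueness of the matrix-game value. The only difference is cosmetic: you build one global game up front, while the paper modifies $G$ into $G'$ stage by stage.

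The genuine issue is the step you yourself flag as the ``main obstacle'' and defer: you claim the truncation $\min\{\cdot,H\}$ is never active on observed tuples, and that you would prove this by an auxiliary induction using rewards in $[0,1]$ and optimistic initialisation. That auxiliary claim is false. Take $h=H-1$ and an observed tuple $(s,a,b)$ with reward $r>0$ and deterministic successor $s'$; if the max player has an action $a'$ at $s'$ such that $(H,s',a',b')$ is unobserved for every $b'$, then the row of $\bar{Q}^{k}_{H}(s',\cdot,\cdot)$ indexed by $a'$ is identically $H$, so the Nash value $\bar{V}^{k}_{H}(s')=H$, and the untruncated backup at $(s,a,b)$ is $r+H>H$. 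The clip is therefore active and $\bar{Q}^{k}_{H-1}(s,a,b)=H$, whereas any $G\in D_k$ must satisfy $\sup_{\mu}\inf_{\nu}Q^{\mu,\nu}_{G,H-1}(s,a,b)=r+V^{\star}_{G,H}(s')$; you cannot simultaneously have $V^{\star}_{G,H}(s')=H$ (needed for the $V$-identity at $s'$) and $r+V^{\star}_{G,H}(s')=H$ (needed for the $Q$-identity at $(s,a,b)$). So the identities cannot both hold for a single $G$ in this case, and no auxiliary lemma of the kind you describe can rescue it. To be fair, the paper's own proof silently writes $\bar{Q}^{k}_h=\hat{R}^{k}_h+\hat{P}^{k\,\top}_h\bar{V}^{k}_{h+1}$ without the clip and so has the same blind spot; a repair would have to either weaken the $Q$-identity to an inequality at clipped entries, or argue the statement only for entries where the clip is inactive and show that is enough for Theorem~\ref{thm:strategic_ulcb_efficiency}. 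As written, your deferred step would fail.
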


\begin{proof}
We prove this by induction on $h$.  Assume that for some $k \geq 1$, $h \ in [H]$, there exists a game $G \in D_k$ such that
\begin{equation}
    \label{eqn:existence_assumption}
    \bar{V}^{k}_{h+1}(s) = \sup_{\mu}\inf_{\nu}V^{\mu,\nu}_{G,h+1}(s)
\end{equation}
for all $s \in S_{h+1}$.  For each $s \in S_h$, $a \in A_{h,s}$, and $b \in B_{h,s}$, if $(h, s,a,b) \in \mathcal{H}_t$, then since $G \in D^k$ we will have $\hat{R}^{k}_h(s,a,b) = R_{G,h}(s,a,b) = R_h(s,a,b)$ and $\hat{P}^{k}_h(s,a,b) = P_{G,h}(s,a,b) = P_h(s,a,b)$, and so
\begin{align}
    \bar{Q}^{k}_h(s,a,b) &= R_{G,h}(s,a,b) + P_{G,h}(s,a,b)^{\top}\bar{V}^{k}_{G,h+1} \\
    &= \sup_{\mu}\inf_{\nu}Q^{\mu,\nu}_{G,h}(s,a,b)
\end{align}
On the other hand, if $(h, s,a,b) \notin \mathcal{H}_t$, then we have $\bar{Q}^{k}_h(s,a,b) = H$.  In this case, there exists a game $G' \in D^k$ that is equivalent to $G$ for all $h' \geq h$, but for which $P_{G',h}(s,a,b,s^{*}) = 1$, and $R_{h}(s,a,b) = H$, where $s^{*}$ is our hypothetical absorbing state with reward $0$ for all actions and time steps.  Because transition distributions can be selected independently of one another for each $s$, $a$ and $b$, there exists $G' \in D_k$ such that $P_{G',h}(s,a,b,s^{*}) = 1$, and $R_{h}(s,a,b) = H$ for all $s \in S_h$, $a \in A_{h,s}$, and $b \in B_{h,s}$ where $(h,s,a,b) \notin \mathcal{h}_t$, such that $\bar{Q}^{k}_h(s,a,b) = \sup_{\mu}\inf_{\nu}Q^{\mu,\nu}_{G',h}(s,a,b)$.  We then have that
\begin{align}
    \bar{V}^{k}_h(s) &= \mu^{k}_h(s)^{\top}\bar{Q}^{k}_h(s,\cdot,\cdot)\tilde{\nu}^{k}_h(s) \\
    &= \sup_{\mu}\inf_{\nu}\bar{Q}^{k}_h(s,\cdot,\cdot) \\
    &= \sup_{\mu}\inf_{\nu}\bar{Q}^{k}_{G',h}(s,\cdot,\cdot)
\end{align}
Noting that Equation~\ref{eqn:existence_assumption} holds trivially for $h = H$, where we implicitly assume that $\bar{V}^{k}_{H+1} = V^{\mu,\nu}_{H+1} = 0$, this proves the lemma for all $h \in H$.
\end{proof}

To show that Strategic ULCB is strategically efficient for the max-player exploration policy, we need to show that, for some game $G \in D_k$, $\mu^{k}$ is the max player component of a Nash equilibrium of $G$.  

\begin{proof}[Proof of Theorem~\ref{thm:strategic_ulcb_efficiency}]
Let $G \in D_k$ be a game for which Equations~\ref{eqn:v_existence} and~\ref{eqn:q_existence} hold.  By Lemma~\ref{lem:existence}, such a game always exists.  We can prove that $\mu^k$ is a max-player component of an equilibrium of $G$ by induction on $h$.  Assume that, for $h \in [H]$ and for all $s \in S_h$
\begin{equation}
    \label{eqn:equilibrium_assumption}
    \mu^{k} \in \argmax_{\mu}\inf_{\nu}V^{\mu,\nu}_{G,h+1}(s)
\end{equation}
We then have that, for all $h \in H$, $s \in S_h$
\begin{align}
    \mu^{k}_h(s) &\in \argmax_{x}\inf_{y}x^{\top}\bar{Q}^{k}_h(s,\cdot, \cdot) y \\
    &= \argmax_{x}\inf_{y}x^{\top}\left[\sup_{\mu}\inf_{\nu}Q^{\mu,\nu}_{G,h}(s,\cdot, \cdot)\right] y \\
    &= \argmax_{x}x^{\top}\left[\sup_{\mu}\inf_{\nu}Q^{\mu,\nu}_{G,h}(s,\cdot, \cdot)\nu_h(s)\right] \\
    &= \argmax_{x}x^{\top}\inf_{\nu}V^{\mu^{k},\nu}_{G,h+1}(s)
\end{align}
where the last line implies that
\begin{equation}
    \mu^{k} \in \argmax_{\mu}\inf_{\nu}V^{\mu,\nu}_{G,h}(s)
\end{equation}
Noting that Equation~\ref{eqn:equilibrium_assumption} is implicitly satisfied for $h = H$, this concludes the proof for $\mu^{k}$.  Repeating this process for $\nu^{k}$ proves the result.
\end{proof}

\section{Strategic Nash-Q Algorithm}
\label{apx:nash_q}

\begin{algorithm}[h]
    \begin{algorithmic}
        \STATE \textbf{Inputs:} $\alpha_{t \geq 0}$, $\beta_{t \geq 0}$ \\
        \STATE \textbf{Initialize:} $\forall (h, s, a, b)$, $\bar{Q}_h(s,a,b) \leftarrow H$, $\ubar{Q}_h(s,a,b) \leftarrow 0$, $N_h(s,a,b) \leftarrow 0$, $\mu^{1}_h(s,a) \leftarrow \sfrac{1}{\vert A_{h,s}\vert}$, $\nu^{1}_h(s,a) \leftarrow \sfrac{1}{\vert B_{h,s}\vert}$.
        \FOR{episode $k = 1,\ldots,K$}
            \STATE observe $s^{k}_1$.
            \FOR{step $h=1,\ldots,H$}
                \STATE take action $a^{k}_h \sim \mu^{k}_{h}(s^{k}_h)$, $b^{k}_h \sim \nu^{k}_{h}(s^{k}_h)$.
                \STATE observe reward $r^{k}_h$, next state $s^{k}_{h+1}$.
                \STATE $N_h(s^{k}_h,a^{k}_h,b^{k}_h) \leftarrow N_h(s^{k}_h,a^{k}_h,b^{k}_h) + 1$
                \STATE $t \leftarrow N_h(s^{k}_h,a^{k}_h,b^{k}_h)$
                \STATE $\bar{Q}_h(s^{k}_h,a^{k}_h,b^{k}_h) \leftarrow \min\{(1 - \alpha_t)\bar{Q}_h(s^{k}_h,a^{k}_h,b^{k}_h) + \alpha_t (r^{k}_h + \bar{V}^{k}_{h+1}(s^{k}_{h+1}) + \beta_t),H\}$
                \STATE $\ubar{Q}_h(s^{k}_h,a^{k}_h,b^{k}_h) \leftarrow \max\{(1 - \alpha_t)\ubar{Q}_h(s^{k}_h,a^{k}_h,b^{k}_h) + \alpha_t (r^{k}_h + \ubar{V}^{k}_{h+1}(s^{k}_{h+1}) - \beta_t),0\}$
                \STATE $\mu^{k+1}_h(s^{k}_h), \tilde{\nu^{k+1}_h(s^{k}_h)} \leftarrow \text{Nash}(\bar{Q}_h(s^{k}_h,\cdot,\cdot))$
                \STATE $\mu^{k+1}_h(s^{k}_h), \tilde{\nu^{k+1}_h(s^{k}_h)} \leftarrow \text{Nash}(\bar{Q}_h(s^{k}_h,\cdot,\cdot))$
            \ENDFOR
        \ENDFOR
    \end{algorithmic}
    \caption{The Strategic Nash-Q algorithm.  Similar to Optimistic Nash-Q, Strategic Nash-Q maintains upper and lower bounds on the optimal value and Q-functions.  Unlike Optimistic Nash-Q, Strategic Nash-Q computes the max and min-player policies for each state independently, of one another, and updates its value function bounds under the assumption that the adversary acts pessimistically (optimizes the lower-bound on its expected return, rather than the upper bound).  Like Strategic ULCB, Strategic Nash-Q maintains separate evaluation policies $\mu^k$ and $\nu^k$.}
    \label{alg:nash_q}
\end{algorithm}

Algorithm~\ref{alg:nash_q} details the Strategic Nash-Q algorithm, which applies the strategically efficient updater rules of Strategic ULCB to the Optimistic Nash-Q algorithm of~\cite{bai2020nash_q}.  Here, the sequences of learning rates $\alpha_t$ and exploration bonuses $\beta_t$ are left as free hyperparameters that can be tuned to a specific task.  In our experimental results, we use the theoretically justified learning rate of $\alpha_t = \frac{H + 1}{H + t}$, while $\beta_t = \beta \frac{1}{\sqrt{t}}$, where $\beta$ is a task-specific hyperparameter.

\end{appendices}

\end{document}